\documentclass[letterpaper, 10 pt, conference]{ieeeconf}  

\IEEEoverridecommandlockouts                              

\usepackage{graphics} 
\usepackage{times} 

\usepackage{amsmath} 
\usepackage{amssymb}  
\usepackage{amsthm}
\usepackage{mathtools,amsfonts,amssymb,bm}
\usepackage{hyperref}
\usepackage[demo]{graphicx}
\usepackage{subcaption}
\usepackage{multirow}
\usepackage{xcolor}
\usepackage{soul}
\usepackage{cite}
\usepackage{float}
\usepackage{algorithm}
\usepackage{algpseudocode}
\usepackage{algorithmicx}
\usepackage{optidef}
\usepackage{anyfontsize}
\usepackage{booktabs}
\usepackage{comment}
\usepackage{microtype}
\usepackage{todonotes}
\newtheorem{definition}{Definition}

\usepackage{optidef}
\newtheorem{theorem}{Theorem}
\usepackage{rotating}
\usepackage{pifont}
\usepackage{setspace}
\usepackage[table]{xcolor}
\newtheorem{proposition}{Proposition}
\definecolor{shadowColor}{gray}{0.9}

\renewenvironment{proof}[1][Proof]{%
  \par\noindent\pushQED{\qed}\normalfont
  \textit{#1.}\enspace\ignorespaces
}{%
  \popQED\par
}
\usepackage{rotating}

\title{\LARGE \bf
CoCoNav: Conformal Control for Safe Robot Navigation in Crowds
}

\author{Cheng Guo$^{1,2}$, Mingzhe Ni$^{1}$ Zheng Liang$^{3}$, Yihu Ling$^{3}$, Yuan Hu$^{3}$, Michele Caprio$^{4}$, Daniele Pucci$^{1,5}$, Wei Pan$^{6}$
\thanks{$^{1}$Department of Computer Science, The University of Manchester, Manchester, UK.}%
\thanks{$^{2}$Human-Robot Interfaces and Interaction Laboratory, Italian Institute of Technology, Genoa, Italy.}%
\thanks{$^{3}$Genisom AI, Shanghai, China.}
\thanks{$^{4}$Department of Computer Science, University of Warwick, Coventry, UK.}
\thanks{$^{5}$Generative Bionics, Genoa, Italy.}
\thanks{$^{6}$School of Engineering, Newcastle University, Newcastle, UK.}
}

\begin{document}
\setstretch{0.95}
\maketitle
\thispagestyle{empty}
\pagestyle{empty}
\begin{abstract}
Safe and efficient robot navigation in crowds requires anticipating pedestrian motion despite uncertain and potentially shifting prediction errors. Existing reactive methods can produce oscillatory behavior, while predictive planners often treat forecasts as exact or rely on restrictive error models. Incorporating conservative uncertainty sets as hard constraints can also render model predictive control (MPC) infeasible. We propose \textit{CoCoNav}, a crowd-navigation framework that combines online conformal calibration with runtime-certified planning. A horizon-specific conformal proportional--integral controller adapts trajectory-error bounds to regulate long-run empirical coverage, enabling the framework to respond to changing prediction errors. A \textit{relax-then-verify} planner preserves solver feasibility by generating nominal trajectories with soft-constrained MPC and separately certifying them, together with contingency maneuvers, against the calibrated bounds before execution. Simulations and quadruped experiments show that CoCoNav achieves a favorable balance among collision avoidance, task success, and navigation efficiency relative to the evaluated baselines.
\end{abstract}

\section{INTRODUCTION}
\label{sec:introduction}
Safe and efficient robot navigation in crowds requires more than avoiding people at their currently observed positions: a robot must anticipate how pedestrians may move and account for the uncertainty of those predictions. This is difficult because human motion is interactive, multimodal, and nonstationary, so prediction errors can change as the crowd configuration and operating environment evolve \cite{francis2025principles}.
The central challenge is therefore to convert fallible trajectory forecasts into actions that make progress without silently losing safety when the forecasts are inaccurate or the planning problem becomes difficult.

Reactive navigation methods compute collision-avoidance actions from the current configuration and remain attractive for their simplicity. Because they do not explicitly reason over future interactions, however, they may react late or behave conservatively in dense flows. Prediction-aware planners instead incorporate forecast pedestrian trajectories into trajectory optimization or model predictive control (MPC), allowing the robot to anticipate crossings and negotiate space over a planning horizon \cite{brito2019model, poddar2023crowd}. Their effectiveness then depends not
only on forecast accuracy, but also on how forecast errors are represented and
used by the planner.

\begin{figure}[t]
    \centering
    \includegraphics[width=\linewidth]{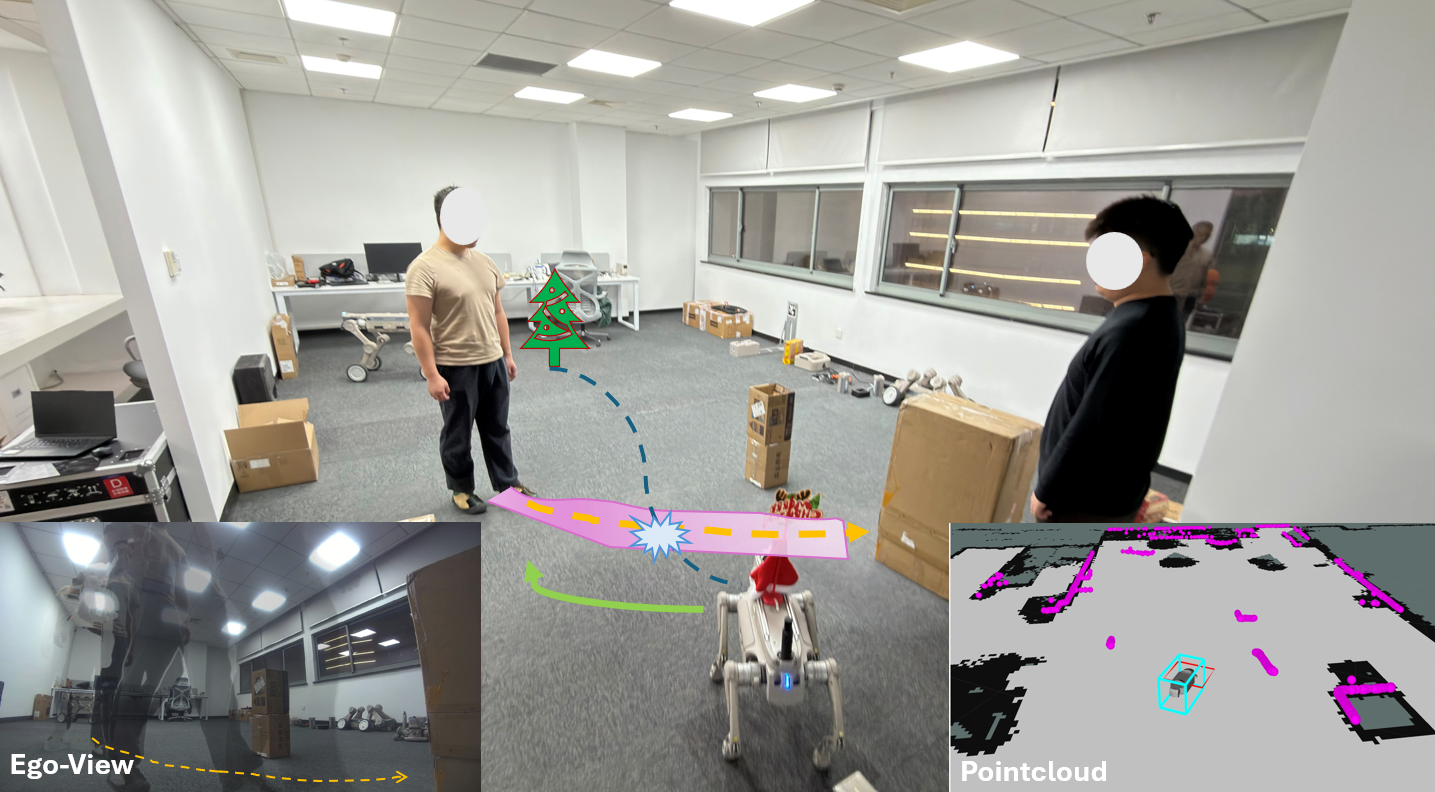}
    \caption{
    Safe navigation around dynamic agents. (\textbf{Bottom Left}) Robot RGB ego-view showing the agent's directional motion trail. (\textbf{Bottom Right}) LiDAR point cloud. (\textbf{Main View}) The robot tracks the global reference (dotted blue) but deviates to a safe path (solid green) to avoid the agent's predicted trajectory (yellow) and conformal uncertainty sets (purple).
    }
    \label{fig:first_page_fig}
\end{figure}

Prior work propagates prediction uncertainty through confidence-aware collision avoidance or stochastic MPC, typically using a specified probabilistic model of future motion
\cite{fridovich2020confidence, nair2022stochastic}. These methods are effective when that model remains representative, but model misspecification and distribution shift can invalidate the resulting uncertainty regions.
Conformal prediction instead calibrates prediction regions from observed errors without prescribing a parametric error distribution \cite{angelopoulos2023conformal}. Adaptive conformal control methods further update these regions online as errors arrive \cite{gibbs2021adaptive, angelopoulos2023conformalpid}. Nevertheless, calibration only determines how much uncertainty to associate with a forecast; it does not determine how the robot should act when a calibrated region is too restrictive for the planner.

This planning challenge motivates a safety-filtering perspective that separates performance-oriented planning from runtime certification. A nominal controller first proposes an action or rollout, which is executed only if a monitoring condition certifies its safety; otherwise, the filter selects a contingency action \cite{hsu2023safety,wabersich2021predictive}. Existing predictive safety filters typically derive guarantees from known uncertainty models, robust reachable sets, or invariant terminal conditions. These assumptions are difficult to satisfy in crowd navigation because pedestrian prediction errors can evolve online with the surrounding scene. A practical filter must therefore distinguish a genuinely certified contingency from a fallback that is merely presumed safe, while also explicitly identifying cases in which no available candidate can be certified.

We address these coupled calibration, feasibility, and certification
challenges with \emph{CoCoNav}. Figure \ref{fig:first_page_fig} illustrates how its calibrated prediction bounds inform crowd-avoidance behavior. 
First, building on the proportional--integral mechanism of \cite{angelopoulos2023conformalpid}, we develop an online calibration scheme for pedestrian trajectory prediction. Each prediction horizon maintains an independent calibration state, and delayed prediction errors are evaluated against the thresholds associated with the forecasts that produced them. Second, a soft-constrained MPC incorporates the calibrated bounds as penalties to generate a progress-oriented nominal rollout, without interpreting the resulting solution as certified safe. Third, an \emph{a posteriori} monitor certifies the entire nominal rollout against the same bounds. If certification fails, the monitor evaluates a library of dynamically admissible contingency trajectories and, if necessary, an acceleration-limited braking rollout before executing the first control input and replanning. By separating candidate generation from safety certification, CoCoNav avoids embedding conformal sets directly as hard MPC constraints. Moreover, rather than assuming the existence of an always-safe fallback, it explicitly records cases in which no available action can be certified. We integrate these components into an onboard pipeline and evaluate the resulting system in simulation and on a quadruped robot, providing initial evidence of its practical viability beyond simulation.
Our main contributions are listed as follows:
\begin{itemize}
    \item We adapt conformal proportional--integral (CPI) control to enable causal, horizon-specific calibration of multi-step pedestrian trajectory errors. The resulting uncertainty bounds adapt to evolving prediction errors while regulating long-run empirical coverage.
    \item We introduce a \emph{relax-then-verify} control scheme in which a
    soft-constrained MPC generates candidate rollouts and an \emph{a
    posteriori} module certifies nominal and contingency rollouts, mitigating
    hard-constraint infeasibility while retaining explicit statistical safety
    accounting.
    \item We evaluate the integrated system in extensive simulations and
    quadruped-robot experiments, providing evidence of its practical
    deployability and its safety--efficiency trade-off.
\end{itemize}

\section{RELATED WORK}
\label{sec:background}
\subsection{Prediction-Aware Crowd Navigation}
\label{sec:crowd_navigation}
\begin{figure*}[th]
    \centering
    \includegraphics[width=0.95\textwidth]{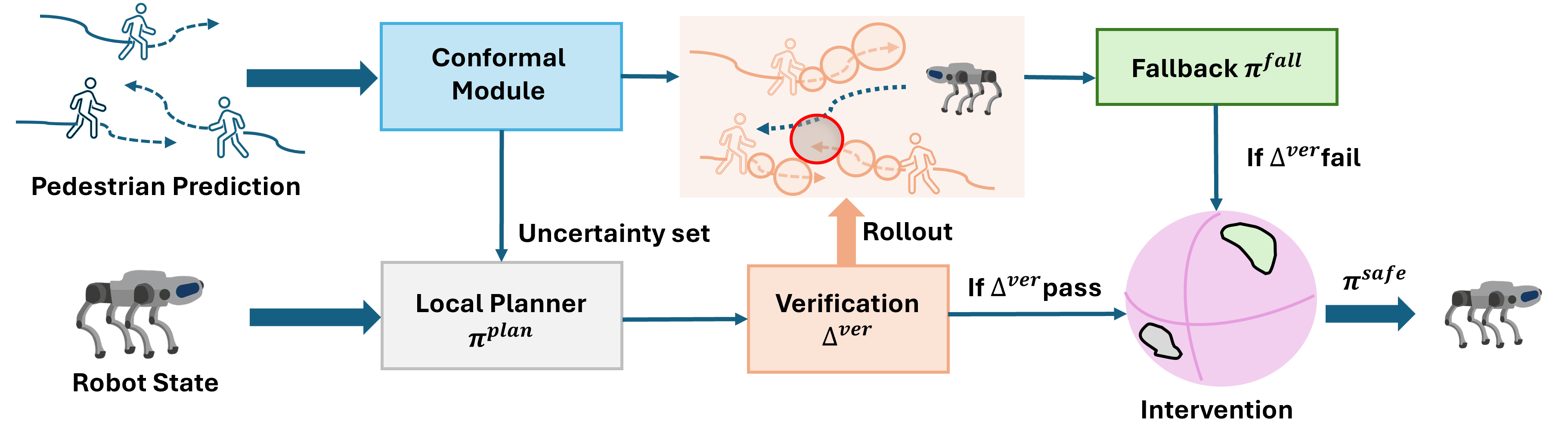}
    \caption{Overview of CoCoNav. Online conformal calibration constructs adaptive robot no-go regions for soft MPC planning. A posteriori verification accepts the nominal roll-out or invokes the certified fallback policy, after which the first control of the selected roll-out is executed}
    \label{fig:framework}
\end{figure*}
Robot navigation among pedestrians has been studied through reactive, optimization-based, and learning-based approaches \cite{mirsky2024conflict}. Classical reactive methods, including social-force models, dynamic-window search, and reciprocal velocity obstacles, construct collision-avoidance actions directly from the current configuration \cite{helbing1995social, fox2002dynamic, van2011reciprocal}. These methods are computationally attractive, whereas anticipatory navigation additionally requires reasoning about how pedestrian motion may evolve over the planning horizon.

Prediction-aware planners incorporate pedestrian forecasts into trajectory optimization or MPC \cite{brito2019model, poddar2023crowd, de2024topology, martinez2024shine}. Learning-based alternatives obtain navigation policies from demonstrations \cite{xie2021towards, pokle2019deep, karnan2022socially, chandra2024towards} or reinforcement learning \cite{chen2019crowd, everett2021collision}. In parallel, learned trajectory predictors model socially coupled and potentially multimodal pedestrian motion \cite{alahi2016social, salzmann2020trajectron, xu2022socialvae}. Regardless of the forecasting architecture, prediction errors remain consequential when a planner uses forecast positions to enforce robot--pedestrian clearance. Our work therefore focuses on online calibration of trajectory-prediction error rather than treating the output of a predictor as exact.

\subsection{Online Conformal Calibratin for Motion Planning}
\label{sec:conformal_prediction}
Uncertainty-aware planners have represented prediction uncertainty using confidence regions, Gaussian models, and stochastic or chance-constrained optimization \cite{fridovich2020confidence, fan2021step, omainska2021gaussian, nakamura2022online, nair2022stochastic}. Their validity depends on the corresponding modeling and distributional assumptions. Conformal prediction instead constructs calibrated prediction regions from nonconformity scores without requiring a parametric error distribution \cite{angelopoulos2023conformal}. Conformal regions have been incorporated into dynamic-environment planning and MPC constraints \cite{lindemann2023safe, huang2025interaction}, and have also been used to filter actions proposed by learned navigation policies \cite{strawn2023conformal}. Related decision-oriented formulations propagate conformal uncertainty into sensorimotor policy learning and downstream autonomous decisions \cite{huang2024conformal, lekeufack2024conformal}. Standard split-conformal calibration is not designed to track a changing error sequence \cite{tibshirani2019conformal}. Adaptive conformal methods update their thresholds online \cite{gibbs2021adaptive, dixit2023adaptive, yao2025towards}, while Conformal PID control interprets coverage regulation through proportional, integral, and predictive feedback components \cite{angelopoulos2023conformalpid}. We adapt its proportional--integral update to causal, horizon-indexed trajectory-error scores,
resulting in a long-run empirical miscoverage bound under its stated conditions.

\subsection{Runtime Certification for Safety Control}
Calibrating predictive uncertainty does not by itself specify how a robot should act on the resulting uncertainty sets. One option is to impose them as hard planning constraints \cite{lindemann2023safe, dixit2023adaptive, huang2025interaction}; however, large online-calibrated sets can make a constrained finite-horizon problem infeasible. Safety filters offer a complementary performance--safety separation: a task controller proposes an action, while a runtime monitor accepts or modifies it using a safety-oriented fallback \cite{hsu2023safety}. Within this taxonomy, model predictive shielding checks whether a candidate action followed by a fallback rollout can safely reach an invariant terminal set and switches to the fallback when the check fails \cite{bastani2021safe}. Predictive safety filters instead compute a minimally modified input through finite-horizon optimization with terminal safety conditions \cite{wabersich2021predictive}; probabilistic extensions replace worst-case tubes with probabilistic reachable sets \cite{wabersich2021probabilistic}. Conformal predictive safety filtering provides a related statistical interface for learned controllers in dynamic environments \cite{strawn2023conformal}. 
These methods clarify the basis of safety claims, though their guarantees depend on the dynamics, uncertainty model, terminal safe set, and fallback availability. Our scheme evaluates nominal and contingency MPC rollouts using online-calibrated error bounds, braking if neither passes. Rather than enforcing an invariant terminal set, it supports a long-run clearance-violation bound that accounts for execution, perception, and no-certified-action events.

\section{PROBLEM FORMULATION}
We consider a robot governed by
\begin{equation}
    \begin{aligned}
        x_{k+1}&=f(x_k,u_k),&
        p_k&=P(x_k)\in \mathbb{R}^2,\\
        u_k&=(v_k,\omega_k)\in \mathcal{U}.&
    \end{aligned}
    \label{eq:compact_dynamics}
\end{equation}
where $p_k$ indicates planar position and $(v_k, \omega_k)$ are the linear and angular velocities. We define a navigation task as $\mathcal{T}:=(x_{\mathrm{init}}, x_{\mathrm{goal}}, \mathcal{O}, \mathcal{S}_o, \mathcal{S}_a, \mathcal{U}, \delta)$, where $x_{\mathrm{init}}$ and $x_{\mathrm{goal}}$ are the initial and goal robot poses, $\mathcal{S}_o$ and $\mathcal{S}_a$ denote static obstacles and moving pedestrians, $\mathcal{U}$ is the robot action space, and $\delta \in (0,1)$ is the target conformal miscoverage rate. At each time step $k$, the observation $o_k=(o_k^{\mathrm{rgb}}, o_k^{\mathrm{geo}}) \in \mathcal{O}$ combines visual and geometric sensing. 

Let $\mathcal{I}_k$ be the physical pedestrians relevant to safety, with positions $y_{k,j} \in \mathbb{R}^2$, and let $\mathcal{J}_k \subseteq \mathcal{I}_k$ be those currently tracked and available to the controller. We define the signed clearance from a robot position $p$ as
\begin{equation}
    c_j(p,y):=\lVert{p-y}\rVert_2-(R_r+R_j),
    \label{eq:compact_clearance}
\end{equation}
where $R_r$ and $R_j$ are the robot and pedestrian radii. Given a required clearance $d_{\mathrm{safe}} \geq 0$, we have
\begin{equation}
    h_k:=\min_{j\in\mathcal{I}_k}c_j(p_k,y_{k,j}),\qquad
    V_k:=\mathbf{1}\{h_k<d_{\mathrm{safe}}\}.
    \label{eq:compact_violation}
\end{equation}
The objective is to reach $x_{\mathrm{goal}}$ while avoiding $\mathcal S_o\cup\mathcal S_a$ and controlling the long-run empirical frequency of $V_k$. In the ideal certified operating domain, the average nonviolation rate is at least $1-\delta$.

\section{PROPOSED METHOD}
Figure~\ref{fig:framework} illustrates the closed-loop pipeline of CoCoNav. Tracked pedestrian histories produce point-trajectory forecasts, while delayed prediction errors update horizon-wise conformal thresholds that define adaptive robot no-go regions. A soft MPC planner generates a nominal roll-out, which is accepted only after posterior verification of dynamic conformal-clearance conditions. If verification fails, the same test is applied to the fallback action library and emergency-braking roll-out; the first control of the selected roll-out is then executed before replanning.
\subsection{Pedestrian Trajectory Forecasting}
\label{sec:traj_prediction}
For each pedestrian $j \in \mathcal{J}_k$, we denote its observed trajectory over $L$ steps with
\begin{equation}
    \mathbf Y_{k,j}^{\mathrm{hist}}
    :=
    [y_{k-L+1,j},\ldots,y_{k,j}]^\top\in\mathbb{R}^{L\times2}.
    \label{eq:compact_history}
\end{equation}
We train a VAE-based predictor $\Phi_\theta$ on the ETH/UCY datasets as in \cite{xu2022socialvae} and compute $H$-steps rollout in the future
\begin{equation}
    \widehat{\mathbf Y}_{k,j}
    :=
    \Phi_\theta(\mathbf Y_{k,j}^{\mathrm{hist}})
    =
    [\widehat y_{k,j}^{1},\ldots,\widehat y_{k,j}^{H}]^\top
    \in\mathbb{R}^{H\times2}.
    \label{eq:compact_prediction}
\end{equation}
For compactness, let $Y_k:=(y_{k,j})_{j\in\mathcal{J}_k}$ and 
$\widehat Y_k^\tau:=(\widehat y_{k,j}^\tau)_{j\in\mathcal{J}_k}$ denote the joint observation and $\tau$-step forecast under a fixed track ordering, and write $\widehat Y_k^{1:H}:=(\widehat Y_k^1,\ldots,\widehat Y_k^H)$.

\subsection{Trajectory-Error Conformal Calibration}
\label{sec:conformal_pid}
\subsubsection{Casual multi-step scores}
For each horizon $\tau\in\{1,\ldots,H\}$, the $\tau$-step-ahead forecast $\widehat Y_{k-\tau}^{\tau}$, issued at time $k-\tau$ for target time $k$, is evaluated once $Y_k$ becomes available. The resulting time-lagged score updates the conformal state used to issue $q_{k+1}^{\tau}$ for the new forecast $\widehat Y_k^\tau$. Let $\mathcal{M}_k^\tau\subseteq\mathcal{J}_{k-\tau}$ contain the consistently matched tracks. For any fixed ordering of these tracks, we define
\begin{align}
    d_{k,j}^{\tau,\mathrm{pred}}
    &:=
    \lVert{y_{k,j}-\widehat y_{k-\tau,j}^{\tau}}\rVert_2,
    \label{eq:compact_individual_error}\\
    \mathbf z_k^\tau
    &:=
    \operatorname{col}_{j\in\mathcal M_k^\tau}
    (y_{k,j}-\widehat y_{k-\tau,j}^{\tau}),\nonumber\\
    s_k^\tau
    &:=
    \lVert{\mathbf z_k^\tau}\rVert_2
    =
    \left(
    \sum_{j\in\mathcal M_k^\tau}
    (d_{k,j}^{\tau,\mathrm{pred}})^2
    \right)^{1/2}.
    \label{eq:compact_joint_score}
\end{align}
This stacked Euclidean score is premutation invariant and directly satisfies the score-dominance relation
\begin{equation}
    d_{k,j}^{\tau,\mathrm{pred}}\leq s_k^\tau,
    \qquad j\in\mathcal M_k^\tau.
    \label{eq:compact_score_dominance}
\end{equation}
Each forecast, track set, and associated threshold is stored when issued. Missing or unmatched safety-relevant pedestrians are charged separately to the perception exception term. 

\subsubsection{Conformal quantile update}
For every $\tau$, the threshold $q_{k+1}^{\tau,\mathrm{raw}}$ is issued at planning step $k$ together with $\widehat Y_k^\tau$. Therefore, the delayed score $s_k^\tau$, which evaluates the forecast issued at $k-\tau$, is compared with the stored threshold 
$q_{k-\tau+1}^{\tau,\mathrm{raw}}$. Once the score is observed, set
\begin{equation}
\begin{aligned}
    e_k^{\tau,\mathrm{raw}}
    &:=
    \mathbf{1}\{s_k^\tau>q_{k-\tau+1}^{\tau,\mathrm{raw}}\},&
    g_k^\tau&:=e_k^{\tau,\mathrm{raw}}-\delta,\\
    G_k^\tau
    &:=
    \begin{cases}
        \sum_{i=\tau}^{k}g_i^\tau,&k\geq\tau,\\
        0,&k<\tau.
    \end{cases}
\end{aligned}
    \label{eq:compact_error}
\end{equation}
Each horizon maintains an independent conformal state
\begin{align}
    P_{k+1}^\tau
    &=P_k^\tau+\eta_k^\tau g_k^\tau,
    \label{eq:compact_p_update}\\
    q_{k+1}^{\tau,\mathrm{raw}}
    &=P_{k+1}^\tau+r_k(G_{k-1}^\tau),
    \label{eq:compact_q_update}\\
    r_k(x)
    &:=
    K_I\widetilde{\tan}\!\left(
    \frac{x\log(k+1)}{C_{\mathrm{sat}}(k+1)}
    \right),
    \label{eq:compact_integrator}
\end{align}
where $\widetilde{\tan}(z)$ equals $\tan(z)$ on $(-\pi/2,\pi/2)$ and saturates to the corresponding signed infinity outside this interval. The optional proportional scaling sets 
$\eta_k^\tau$ to the base learning rate times the recent finite score range. Since the raw recurrence may be negative, verification uses the physical budget
\begin{equation}
    q_k^{\tau,\mathrm{eff}}
    :=
    \max\{0,q_k^{\tau,\mathrm{raw}}\}.
    \label{eq:compact_effective_q}
\end{equation}
Consequently,
$\mathbf{1}\{s_k^\tau>q_{k-\tau+1}^{\tau,\mathrm{eff}}\}
\leq e_k^{\tau,\mathrm{raw}}$.
The region issued at time $k$ for horizon $\tau$ is the joint norm ball
\begin{equation}
    C_k^\tau
    :=
    \left\{
    (y_j)_{j\in\mathcal{J}_k}:
    \left(
    \sum_{j\in\mathcal{J}_k}
    \lVert{y_j-\widehat y_{k,j}^{\tau}}\rVert_2^2
    \right)^{1/2}
    \leq q_{k+1}^{\tau,\mathrm{eff}}
    \right\}.
    \label{eq:compact_prediction_set}
\end{equation}
To obtain the final clearance-violation guarantee, we can prove the following proposition that bounds the empirical one-step miscoverage frequency.
\begin{proposition}[One-sided empirical calibration]
\label{prop:compact_cpi}
Assume $K_I,C_{\mathrm{sat}}>0$, all scores and learning rates are finite.
For
\[
    a_t:=\frac{\pi C_{\mathrm{sat}}}{2}
    \frac{t+1}{\log(t+1)},\qquad
    \bar a_t:=\max_{1\leq i\leq t}a_i,
\]
the one-step stream ($\tau=1$) satisfies, for $T\geq2$,
\begin{equation}
    \frac1T\sum_{t=1}^{T}e_t^{1,\mathrm{raw}}
    \leq
    \delta+r_T,\qquad
    r_T:=\frac{\bar a_{T-1}+3}{T}\longrightarrow0.
    \label{eq:compact_cpi_bound}
\end{equation}
\end{proposition}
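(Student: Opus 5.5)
The plan is to follow the saturation argument behind Conformal PID control \cite{angelopoulos2023conformalpid}, specialised to the one-step stream and with the indexing lag made explicit. Write $G_t:=G_t^1=\sum_{i=1}^t(e_i^{1,\mathrm{raw}}-\delta)$. Then
\[
\frac1T\sum_{t=1}^T e_t^{1,\mathrm{raw}}=\delta+\frac{G_T}{T},
\]
so it suffices to show $G_T\le \bar a_{T-1}+3$ for every $T\ge 2$. Two elementary facts drive the argument. First, each increment satisfies $g_t\in[-\delta,1-\delta]$, so $G$ grows by less than $1$ per step. Second, the integrator saturates once its state reaches $a_t$.

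\textbf{Step 1 (saturation forces coverage).} For $\tau=1$, the score $s_t^1$ is compared with $q_t^{1,\mathrm{raw}}=P_t^1+r_{t-1}(G_{t-2})$. The argument of $\widetilde{\tan}$ in $r_{t-1}$ is $G_{t-2}\log t/(C_{\mathrm{sat}}t)$, and it is at least $\pi/2$ exactly when $G_{t-2}\ge a_{t-1}$. In that case $r_{t-1}=+\infty$ because $K_I>0$. Finite learning rates and scores keep $P_t^1$ finite by induction through \eqref{eq:compact_p_update}, so $q_t^{1,\mathrm{raw}}=+\infty$. Since $s_t^1$ is finite, $e_t^{1,\mathrm{raw}}=0$ and $g_t=-\delta\le 0$. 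In words: whenever $G_{t-2}\ge a_{t-1}$, the next increment cannot raise $G$.

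\textbf{Step 2 (induction on the running maximum).} Next I will prove by induction that $G_t\le \bar a_{t-1}+2$ for all $t\ge 2$, with a weaker bound for the first couple of indices. The lag of two steps between $G_{t-2}$ and $e_t$ is exactly what costs the additive constant. Fix $t$ and split into two cases.
\begin{itemize}
\item If $G_{t-2}\ge a_{t-1}$, Step 1 gives $G_t\le G_{t-1}\le \bar a_{t-2}+2\le \bar a_{t-1}+2$, using the inductive hypothesis and the monotonicity of $\bar a$.
\item If $G_{t-2}<a_{t-1}$, then $G_t=G_{t-2}+g_{t-1}+g_t< a_{t-1}+2\le \bar a_{t-1}+2$.
\end{itemize}
For the base cases, $G_0=0$ and $G_1,G_2\le 2$ are handled directly. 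The extra $+1$ in the stated constant $3$ absorbs these initial indices, for which $r$ is evaluated at $k<\tau$ or at $\log 1=0$. Dividing by $T$ then gives \eqref{eq:compact_cpi_bound}.

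\textbf{Step 3 (vanishing remainder).} The identity $a_t/t=\tfrac{\pi C_{\mathrm{sat}}}{2}\tfrac{t+1}{t\log(t+1)}\to 0$ holds, and $a_t\to\infty$. Hence the maximum $\bar a_{T-1}$ is attained either on a fixed finite prefix or by terms with $a_i\le a_{T-1}$ up to eventual monotonicity. In both cases $\bar a_{T-1}/T\to 0$, and therefore $r_T\to 0$.

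The main obstacle I expect is the bookkeeping in Step 2. I need to verify precisely which $G$ index feeds the threshold actually compared at time $t$: the chain is $q_{k+1}$ uses $G_{k-1}$, and $e_k$ uses $q_{k-\tau+1}=q_k$. I also need to confirm that the two-step lag costs at most two increments, so that the constant $3$, rather than something larger, suffices. A secondary point is to argue that the $\widetilde{\tan}=+\infty$ convention yields a genuine $q=+\infty$, which requires $P_t^1$ to stay finite. This follows from the finiteness assumptions through the recurrence \eqref{eq:compact_p_update}.
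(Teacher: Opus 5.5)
Your proposal is correct and follows the paper's own argument. Saturation of $\widetilde{\tan}$ when $G_{t-2}^1\ge a_{t-1}$ forces coverage, the extra round of delay costs at most two increments so induction gives $G_t^1\le\bar a_{t-1}+2$, and $\bar a_{T}=O(T/\log T)$ yields $r_T\to0$; note also that your base case $G_2^1\le 2(1-\delta)\le\bar a_1+2$ already closes the induction, so the extra $+1$ in the constant $3$ is slack rather than something you need to justify.
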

\begin{proof}
For $\tau=1$, saturation makes $q_{t+1}^{1,\mathrm{raw}}=+\infty$ whenever
$G_{t-1}^1\geq a_t$, so the next finite score is covered. The one-round delay permits an overshoot of at most two, and induction gives $G_t^1\leq\bar a_{t-1}+2$. Accounting for initialization adds at most one; using $G_T^1=\sum_{t=1}^{T}(e_t^{1,\mathrm{raw}}-\delta)$ yields \eqref{eq:compact_cpi_bound}. Finally, $\bar a_T=O(T/\log T)$ implies $r_T\to0$.
\end{proof}
\subsection{Uncertainty-Aware MPC}
\label{sec:mpc_controller}
\subsubsection{Nominal candidate}
At time $k$, the local planner computes a nominal sequence $U_k=(u_{k|k},\ldots,u_{k+H-1|k})$ and state roll-out $X_k=(x_{k|k},\ldots,x_{k+H|k})$ by solving
\begin{subequations}
\label{eq:compact_mpc}
\begin{align}
    \min_{X_k,U_k}\quad&
    J_{\mathrm{track}}(X_k,U_k)
    +
    \sum_{\tau=1}^{H}\sum_{j\in\mathcal{J}_k}
    \psi\!\left(d_{k,j}^{\tau}\right)
    \label{eq:compact_mpc_obj}\\
    \mathrm{s.t.}\quad&
    x_{k|k}=x_k,
    \nonumber\\
    &
    x_{k+\tau+1|k}
    =
    f(x_{k+\tau|k},u_{k+\tau|k}),
    \quad \tau=0,\ldots,H-1,
    \nonumber\\
    &
    u_{k+\tau|k}\in\mathcal{U},
    \quad \tau=0,\ldots,H-1,
    \label{eq:compact_mpc_dyn}\\
    d_{k,j}^{\tau}
    &:=
    c_j(p_{k+\tau|k},\widehat y_{k,j}^{\tau})
    -q_{k+1}^{\tau,\mathrm{eff}}.
    \label{eq:compact_soft_margin}
\end{align}
\end{subequations}
Here $J_{\mathrm{track}}$ collects path tracking and velocity control terms, $\psi$ is a decreasing smooth penalty. 
\subsubsection{A posteriori certification}
Let $Z\in\mathcal Z_k$ denote a dynamically feasible horizon-$H$ robot roll-out from $x_k$, with predicted position $p_{k+\tau|k}^Z$ at horizon $\tau$. Any nominal or fallback roll-out $Z$ is certified only if
\begin{equation}
    c_j(p_{k+\tau|k}^{Z},\widehat y_{k,j}^{\tau})
    \geq
    d_{\mathrm{safe}}+
    q_{k+1}^{\tau,\mathrm{eff}}+
    \varepsilon,
    \quad
    \substack{\tau=1,\ldots,H,\\j\in\mathcal{J}_k},
    \label{eq:compact_certificate}
\end{equation}
where $\varepsilon\geq0$ is a verification reserve. We then define the worst-case certified margin and validity indicator
\begin{align}
    \mu_{k,j}^\tau(Z)
    &:=
    c_j(p_{k+\tau|k}^{Z},\widehat y_{k,j}^{\tau})
    -q_{k+1}^{\tau,\mathrm{eff}}-d_{\mathrm{safe}}-\varepsilon,
    \label{eq:compact_cert_residual}\\
    m_k(Z)
    &:=
    \min_{\substack{1\leq\tau\leq H\\j\in\mathcal{J}_k}}
    \mu_{k,j}^\tau(Z),
    \label{eq:compact_robust_margin}\\
    \Gamma_k(Z)
    &:=
    \mathbf{1}\{m_k(Z)\geq0\}.
    \label{eq:compact_cert_indicator}
\end{align}
Let $N_k$ indicate that neither the nominal, nor the fallback action can be certified, and \(B_{k+1}\) for any missing pedestrian tracks.
Algorithm~\ref{alg:compact_method} summarizes the receding-horizon loop, including conformal updates for trajectory prediction, soft MPC and posterior certification.
The full-horizon test is operationally conservative, and the theorem below certifies only the first control that is actually executed. 
\begin{algorithm}[t]
\caption{Trajectory-Error CPI--MPC Pipeline}
\label{alg:compact_method}
\begin{algorithmic}[1]
\Require $\delta,H,d_{\mathrm{safe}},\varepsilon$ and CPI/fallback states
\State Observe $(x_k,Y_k)$ and retrieve the forecast archive
\State Evaluate available delayed scores $s_k^\tau$ by
\eqref{eq:compact_joint_score}
\State Update $\{P_{k+1}^\tau,G_k^\tau,
q_{k+1}^{\tau,\mathrm{eff}}\}_{\tau=1}^{H}$ by
\eqref{eq:compact_error}--\eqref{eq:compact_effective_q}
\State Predict $\widehat Y_k^{1:H}$ using $\Phi_\theta$
\State Solve the soft MPC \eqref{eq:compact_mpc} for $(X_k^*,U_k^*)$
\State $Z_k^{\mathrm{nom}}\gets
\operatorname{Rollout}(x_k,U_k^*)$
\If{$\Gamma_k(Z_k^{\mathrm{nom}})=1$}
    \State $Z_k^{\mathrm{sel}}\gets Z_k^{\mathrm{nom}}$
\Else
    \State Verify the library:
    $\mathcal C_k^{\mathrm{lib}}\gets
    \{Z\in\mathcal Z_k^{\mathrm{lib}}:\Gamma_k(Z)=1\}$
    \If{$\mathcal C_k^{\mathrm{lib}}\neq\varnothing$}
        \State $Z_k^{\mathrm{sel}}\gets
        \operatorname*{arg\,max}_{Z\in\mathcal C_k^{\mathrm{lib}}}
        [m_k(Z)-J_{\mathrm{sec}}(Z)]$
    \Else
        \State $Z_k^{\mathrm{sel}}\gets Z_k^{\mathrm{br}}$
    \EndIf
\EndIf
\State $N_k\gets\mathbf{1}\{\Gamma_k(Z_k^{\mathrm{sel}})=0\}$
\State Archive the issued forecasts, identities, and thresholds
\State Execute $u_k\gets\Pi_0(Z_k^{\mathrm{sel}})$ and replan
\end{algorithmic}
\end{algorithm}

\begin{theorem}[Fallback-aware clearance bound]
\label{thm:compact_main}
Under Proposition~\ref{prop:compact_cpi}, causal forecast logging, and
\eqref{eq:compact_score_dominance}, Algorithm~\ref{alg:compact_method}
satisfies
\begin{equation}
    \frac1T\sum_{k=0}^{T-1}V_{k+1}
    \leq
    \delta+r_T
    +\frac1T\sum_{k=0}^{T-1}
    \left(B_{k+1}+N_k\right).
    \label{eq:compact_main_bound}
\end{equation}
\end{theorem}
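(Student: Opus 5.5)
The plan is a pointwise (per-step) implication followed by summing and invoking Proposition~\ref{prop:compact_cpi}. Fix $k\in\{0,\ldots,T-1\}$. I will show
\[
V_{k+1}\le e_{k+1}^{1,\mathrm{raw}}+B_{k+1}+N_k ,
\]
where $e_{k+1}^{1,\mathrm{raw}}=\mathbf 1\{s_{k+1}^1>q_{k+1}^{1,\mathrm{raw}}\}$ compares the one-step score of the forecast issued at $k$ with the threshold stored at $k$; this is exactly what causal forecast logging supplies. Summing over $k$ and using \eqref{eq:compact_cpi_bound} with the index shift $t=k+1$ then gives \eqref{eq:compact_main_bound}. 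The bound is stated in $T$ with terms $t=1,\ldots,T$, which matches.

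The pointwise claim is proved by contradiction: assume $V_{k+1}=1$ and show that at least one of the three indicators on the right equals one. Suppose all three vanish.

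\emph{From $N_k=0$.} The selected rollout satisfies $\Gamma_k(Z_k^{\mathrm{sel}})=1$. Taking $\tau=1$ in \eqref{eq:compact_certificate}, the next position $p_{k+1|k}^{Z}$ satisfies
\[
c_j\bigl(p_{k+1|k}^{Z},\widehat y_{k,j}^{1}\bigr)\ge d_{\mathrm{safe}}+q_{k+1}^{1,\mathrm{eff}}+\varepsilon \quad\text{for all } j\in\mathcal J_k .
\]
Since $u_k=\Pi_0(Z_k^{\mathrm{sel}})$ is applied to the nominal model $f$, we get $p_{k+1}=p_{k+1|k}^Z$. Execution or model mismatch would have to be absorbed elsewhere; I will state that the model is exact at one step, or fold such events into $N_k$ or $B_{k+1}$.

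\emph{From $B_{k+1}=0$.} Every safety-relevant pedestrian in $\mathcal I_{k+1}$ is tracked and consistently matched. Hence $\mathcal I_{k+1}\subseteq\mathcal M_{k+1}^1\subseteq\mathcal J_k$, and the minimum defining $h_{k+1}$ ranges only over matched tracks.

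\emph{From $e_{k+1}^{1,\mathrm{raw}}=0$.} We have $s_{k+1}^1\le q_{k+1}^{1,\mathrm{raw}}\le q_{k+1}^{1,\mathrm{eff}}$. Score dominance \eqref{eq:compact_score_dominance} then gives $\|y_{k+1,j}-\widehat y_{k,j}^1\|\le q_{k+1}^{1,\mathrm{eff}}$ for every matched $j$.

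\emph{Combining.} Since $c_j$ is $1$-Lipschitz in its second argument (it is a norm minus a constant), the triangle inequality yields, for each matched $j$,
\[
c_j(p_{k+1},y_{k+1,j})\ge c_j\bigl(p_{k+1},\widehat y_{k,j}^1\bigr)-q_{k+1}^{1,\mathrm{eff}}\ge d_{\mathrm{safe}}+\varepsilon\ge d_{\mathrm{safe}} .
\]
Therefore $h_{k+1}\ge d_{\mathrm{safe}}$, so $V_{k+1}=0$, a contradiction.

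The main obstacle is the bookkeeping rather than any estimate, and it has three parts. First, I must confirm that the threshold the certificate used at time $k$ ($q_{k+1}^{1,\mathrm{eff}}$, issued with $\widehat Y_k^1$) is the same one stored and later compared in $e_{k+1}^{1,\mathrm{raw}}$; the indexing in \eqref{eq:compact_error} with $\tau=1$ gives $q_{(k+1)-1+1}^{1,\mathrm{raw}}$, which matches. Second, $B_{k+1}$ must be read as covering every way a physical pedestrian can escape the matched set. Third, the pointwise bound yields $\frac1T\sum_{t=1}^{T}e_t^{1,\mathrm{raw}}$, which must be exactly the quantity controlled in Proposition~\ref{prop:compact_cpi}. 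I will check this last point by noting that the proposition's residual $r_T$ already absorbs the initialization term.
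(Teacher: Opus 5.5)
Your proposal is correct and follows the paper's proof essentially step for step: the paper likewise derives the pathwise bound $V_{k+1}\le e_{k+1}^{1,\mathrm{raw}}+B_{k+1}+N_k$ from score dominance \eqref{eq:compact_score_dominance}, the $\tau=1$ case of the certificate \eqref{eq:compact_certificate}, and the reverse triangle inequality, then sums over $k$ and invokes Proposition~\ref{prop:compact_cpi}. Your caveat about needing $p_{k+1}=p^{Z}_{k+1|k}$ is settled by the exact dynamics assumed in \eqref{eq:compact_dynamics}, which the paper's proof uses without stating it; moreover, the leftover reserve $\varepsilon$ would absorb any one-step execution error of norm at most $\varepsilon$, because $c_j$ is also $1$-Lipschitz in its first argument $p$.
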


\begin{proof}
On a certified round with
$s_{k+1}^1\leq q_{k+1}^{1,\mathrm{eff}}$ and $B_{k+1}=0$,
\eqref{eq:compact_score_dominance} gives
$\lVert{y_{k+1,j}-\widehat y_{k,j}^{1}}\rVert_2
\leq q_{k+1}^{1,\mathrm{eff}}$ for every safety-relevant pedestrian.
The one-step case of \eqref{eq:compact_certificate} and the reverse triangle inequality give
\[
    c_j(p_{k+1},y_{k+1,j})
    \geq
    c_j(p_{k+1|k}^{\mathrm{sel}},\widehat y_{k,j}^{1})
    -q_{k+1}^{1,\mathrm{eff}}
    \geq d_{\mathrm{safe}}+\varepsilon.
\]
Hence, we have pathwise \(V_{k+1}\leq e_{k+1}^{1,\mathrm{raw}}+B_{k+1}+N_k\).
Summation and Proposition~\ref{prop:compact_cpi} prove \eqref{eq:compact_main_bound}.
\end{proof}

\subsection{Certified Contingency Control}
If the nominal roll-out fails certification, the controller verifies a finite library of hold, slow, and straight/turning brake roll-outs:
\begin{equation}
    \mathcal C_k^{\mathrm{lib}}
    =
    \{Z\in\mathcal Z_k^{\mathrm{lib}}:\Gamma_k(Z)=1\}.
    \label{eq:compact_cert_library_short}
\end{equation}
The executed roll-out is selected by
\begin{equation}
Z_k^{\mathrm{sel}}\in
\begin{cases}
\displaystyle
\arg\max_{Z\in\mathcal C_k^{\mathrm{lib}}}
[m_k(Z)-J_{\mathrm{sec}}(Z)],
&\mathcal C_k^{\mathrm{lib}}\neq\varnothing,\\[1mm]
\{Z_k^{\mathrm{br}}\},
&\mathcal C_k^{\mathrm{lib}}=\varnothing,
\end{cases}
\label{eq:compact_fallback_short}
\end{equation}
where $J_{\mathrm{sec}}$ ranks already-certified candidates by speed and smoothness, and $Z_k^{\mathrm{br}}$ is an acceleration-limited stopping roll-out. Braking is certified only if
$\Gamma_k(Z_k^{\mathrm{br}})=1$; else $N_k=\mathbf{1}\{\Gamma_k(Z_k^{\mathrm{sel}})=0\}$ records that no certified action exists. 
\section{EXPERIMENTS AND RESULTS}
\label{sec:experiments}
\subsection{Simulation Environment}
We evaluate the proposed framework in \texttt{MATRiX}\footnote{\url{https://github.com/zsibot/matrix}}, a high-fidelity simulation platform that integrates MuJoCo, Unreal Engine 5, and CARLA to provide interactive environments for robotics research. We design three crowd-navigation scenarios within a 10m\(\times\)10m area, each populated with 4 to 12 pedestrians. In every scenario, the robot must traverse the area from bottom to top while avoiding both static obstacles and moving pedestrians. The three scenarios differ in the structure of the crowd flow: \textit{Opposing Flow}, in which the robot navigates against a crowd moving in the opposite direction; \textit{Intersecting Flow}, in which it crosses a stream of pedestrians moving perpendicular to its path; and \textit{Unstructured Flow}, in which it navigates through a crowd with random, multi-directional motion. Figure \ref{fig:exp_env} illustrates an example of the real and simulated environments.

\begin{figure}[t]
    \centering
    \begin{minipage}[b]{0.48\columnwidth}
        \begin{subfigure}[b]{\linewidth}
            \centering
            \includegraphics[width=\linewidth]{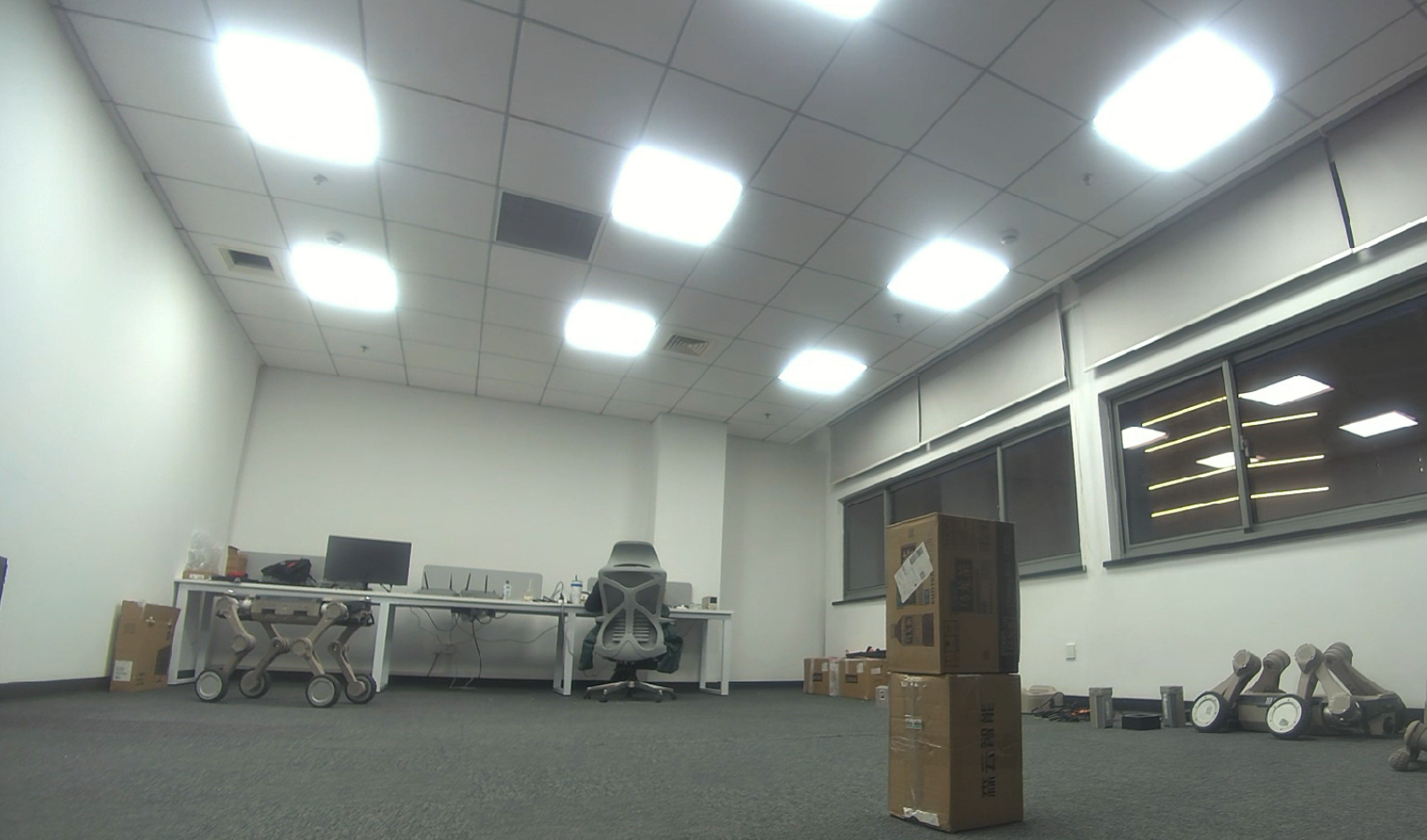}
            \caption{The real-world environment.}
            \label{fig:env_real}
        \end{subfigure}
        
        \vspace{0.3cm} 
        
        \begin{subfigure}[b]{\linewidth}
            \centering
            \includegraphics[width=\linewidth]{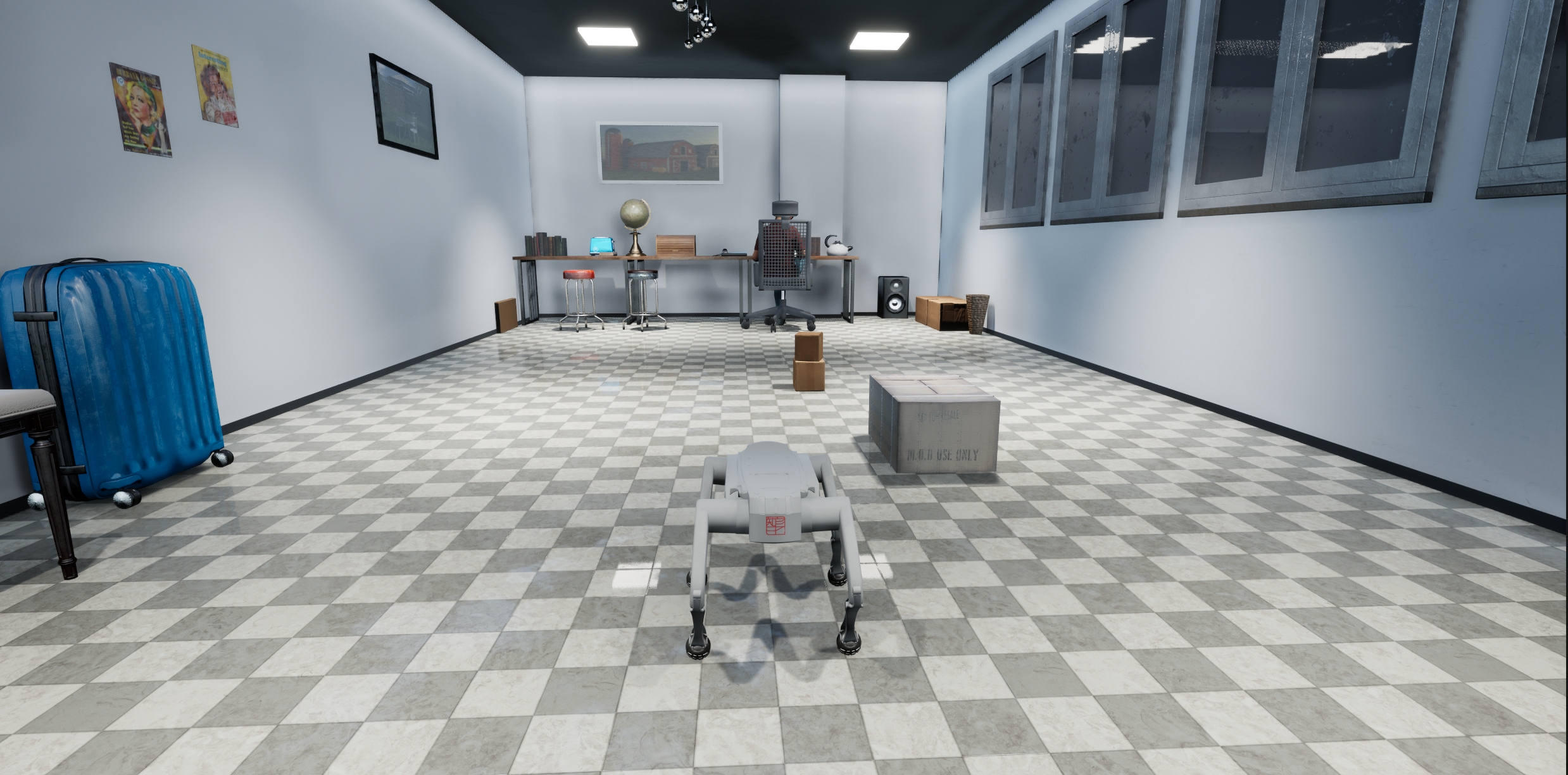}
            \caption{The simulated environment.}
            \label{fig:env_sim}
        \end{subfigure}
    \end{minipage}
    \hfill 
    \begin{minipage}[b]{0.48\columnwidth}
        \centering
        \begin{subfigure}[b]{\linewidth}
            \centering
            \includegraphics[width=\linewidth, height=2.25in]{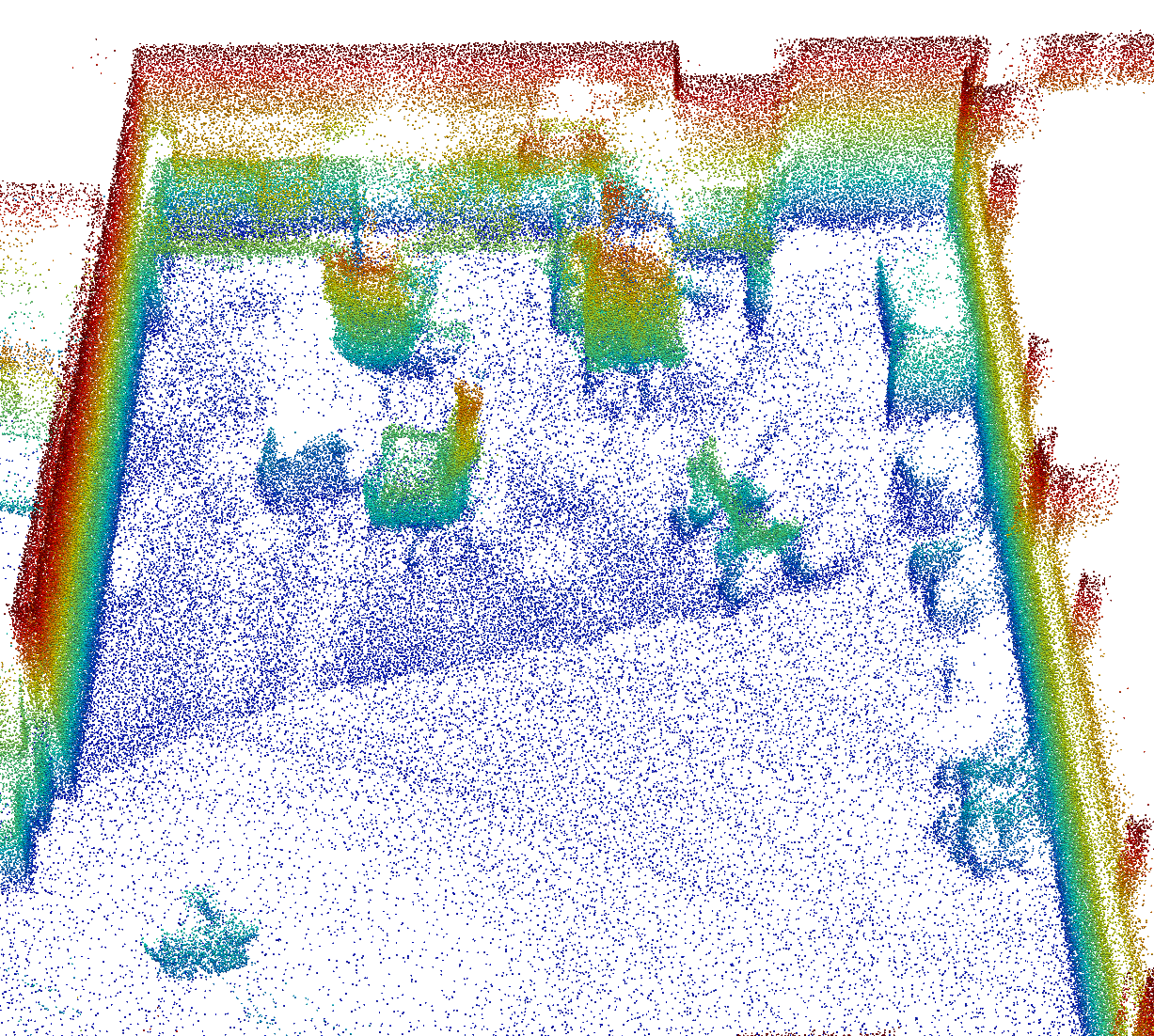} 
            \caption{The reconstructed environment with pointcloud.}
            \label{fig:env_cloud}
        \end{subfigure}
    \end{minipage}
    
    \caption{Snapshots of the environment at different levels.}
    \label{fig:exp_env}
\end{figure}

\subsection{Baselines}
We select three baseline methods for comparison. Since our framework targets a real-world robotic pipeline, each baseline must be adapted to integrate with our perception and control stack.
We therefore prioritize methods with open-source implementations that are compatible with our hierarchical navigation framework. The selected baselines are: \textbf{ST-Planner} \cite{han2023efficient}, a spatiotemporal global planner that supports real-time replanning, with the resulting path tracked by a Pure Pursuit controller; \textbf{ACP-MPC} \cite{dixit2023adaptive}, an MPC-based local planner that incorporates adaptive conformal prediction; and \textbf{MPPI} \cite{williams2016aggressive}, a sampling-based model predictive controller that computes commands from a cost-weighted average over multiple parallel trajectory rollouts.

\subsection{Evaluation Metrics}
We quantify the planners' performance in crowd navigation using the following metrics, as in \cite{mirsky2024conflict}. \textbf{Success Rate} (SR) measures the percentage of episodes in which the robot reaches its goal within the specified time limit, i.e., \(SR=N_\text{success}/N_\text{total}\). \textbf{Minimum Distance} (MD) measures the average minimum distance between the robot and pedestrians during the navigation task, i.e., \(MD=(1/N_\text{total})\sum_{k=1}^{N_\text{total}}D_k\). \textbf{Collision Rate} (CR) measures the percentage of episodes in which at least one collision (MD below the threshold) occurs, i.e., \(CR=N_\text{collision}/N_\text{total}\). \textbf{Navigation Time} (NT) measures the average travel time to the goal across all test episodes, i.e., \(NT=(1/N_\text{total})\sum_{k=1}^{N_\text{total}}T_k\). \textbf{Path Length} (PL) measures the average traveled distance to the goal across all test episodes, i.e., \(PL=(1/N_\text{total})\sum_{k=1}^{N_\text{total}}L_k\).
These metrics collectively reflect the effectiveness (SR), safety (MD, CR), and efficiency (NT, PL) of the planners.

\subsection{Implementation Details}
Our implementation of CoCoNav is written in C++/ROS2 and will be open sourced upon acceptance. Simulations are conducted on an Ubuntu 22.04 system equipped with an NVIDIA RTX 4090 GPU. 
For trajectory prediction, we employ a conditional variational autoencoder that models the future pedestrian motion. The network encodes each agent's motion history with a recurrent backbone and aggregates neighboring agents through an attention-based social pooling module, capturing interactions among pedestrians. Conditioned on these representations, the decoder samples plausible future trajectories from the learned latent space. We pair the predictor with downsampling and adopt an 8-step history window and a 12-step prediction horizon.
Additionally, we port CPI to C++ and enable adaptive parameter tuning.
We solve local planning via Ceres 
Pedestrian and robot radii are set to 0.25m and 0.35m, respectively.

\begin{table*}[th]
\centering
\caption{Quantitative comparison between our method and baselines across various simulated crowd scenarios.}
\label{tab:quantitative_comparison}
\setlength{\tabcolsep}{1.5pt} 
\renewcommand{\arraystretch}{1.0} 

\resizebox{\textwidth}{!}{%
\begin{tabular}{@{}ll ccccc ccccc ccccc@{}} 
\toprule
\multirow{2}{*}{\textbf{Env}} & \multirow{2}{*}{\textbf{Method}} & 
\multicolumn{5}{c}{\textbf{Ped=4}} & \multicolumn{5}{c}{\textbf{Ped=8}} & \multicolumn{5}{c}{\textbf{Ped=12}} \\ 
\cmidrule(lr){3-7} \cmidrule(lr){8-12} \cmidrule(l){13-17} 

 &  & SR $\uparrow$ & \shortstack{MD(m)} $\uparrow$ & CR $\downarrow$ & \shortstack{NT(s)} $\downarrow$ & \shortstack{PL(m)} $\downarrow$
    & SR $\uparrow$ & \shortstack{MD(m)} $\uparrow$ & CR $\downarrow$ & \shortstack{NT(s)} $\downarrow$ & \shortstack{PL(m)} $\downarrow$
    & SR $\uparrow$ & \shortstack{MD(m)} $\uparrow$ & CR $\downarrow$ & \shortstack{NT(s)} $\downarrow$ & \shortstack{PL(m)} $\downarrow$ \\ \midrule

\multirow{4}{*}{\textbf{\shortstack[l]{Opposing\\Flow}}} 
 & ST-Planner & $100\%$ & $0.46{\pm}0.26$ & $20\%$ & $34.45{\pm}3.80$ & $14.95{\pm}0.34$ & $100\%$ & $0.42{\pm}0.02$ & $100\%$ & $37.87{\pm}0.94$ & $15.04{\pm}0.57$ & $100\%$ & $0.39{\pm}0.19$ & $80\%$ & $32.75{\pm}1.36$ & $14.32{\pm}0.33$ \\
 & MPPI       & $100\%$ & $0.62{\pm}0.09$ & $10\%$ & $32.03{\pm}4.78$ & $14.89{\pm}0.68$ & $80\%$ & $0.59{\pm}0.12$ & $10\%$ & $36.82{\pm}5.46$ & $15.01{\pm}0.72$ & $80\%$ & $0.53{\pm}0.12$ & $20\%$ & $39.77{\pm}5.67$ & $16.08{\pm}1.29$ \\
 & ACP-MPC    & $80\%$ & $1.02{\pm}0.21$ & $0\%$ & $40.73{\pm}3.61$ & $17.53{\pm}0.43$ & $0\%$ & $0.61{\pm}0.02$ & $0\%$ & $102.49{\pm}4.34$ & $33.74{\pm}2.08$ & $0\%$ & $0.62{\pm}0.04$ & $0\%$ & $97.86{\pm}2.12$ & $32.94{\pm}1.88$ \\ \cmidrule(l){2-17} 
 \rowcolor{shadowColor} \cellcolor{white}
 & Ours       & $100\%$ & $0.66{\pm}0.10$ & $0\%$ & $36.75{\pm}2.28$ & $16.57{\pm}0.69$ & $90\%$ & $0.64{\pm}0.07$ & $0\%$ & $39.43{\pm}2.87$ & $15.36{\pm}0.42 $ & $100\%$ & $0.62{\pm}0.07$ & $0\%$ & $31.00{\pm}4.26$ & $14.83{\pm}0.83$ \\ \midrule

\multirow{4}{*}{\textbf{\shortstack[l]{Intersecting\\Flow}}}
 & ST-Planner & $100\%$ & $0.61{\pm}0.08$ & $0\%$ & $33.74{\pm}1.75$ & $14.74{\pm}0.38$ & $30\%$ & $0.42{\pm}0.01$ & $100\%$ & $43.76{\pm}1.98$ & $15.83{\pm}0.50$ & $10\%$ & $0.34{\pm}0.05$ & $100\%$ & $45.15{\pm}3.68$ & $15.63{\pm}0.23$ \\
 & MPPI       & $80\%$ & $0.62{\pm}0.02$ & $0\%$ & $37.99{\pm}2.73$ & $15.70{\pm}0.67$ & $10\%$ & $0.56{\pm}0.09$ & $10\%$ & $48.08{\pm}3.61$ & $16.23{\pm}0.34$ & $10\%$ & $0.40{\pm}0.05$ & $100\%$ & $43.73{\pm}1.70$ & $15.30{\pm}0.25$ \\
 & ACP-MPC    & $0\%$ & $0.66{\pm}0.16$ & $10\%$ & $51.34{\pm}6.59$ & $18.89{\pm}1.62$ & $0\%$ & $0.54{\pm}0.09$ & $10\%$ & $105.81{\pm}7.80$ & $33.06{\pm}3.85$ & $0\%$ & $0.50{\pm}0.12$ & $20\%$ & $117.65{\pm}7.23$ & $41.25{\pm}3.24$ \\ \cmidrule(l){2-17} 
  \rowcolor{shadowColor} \cellcolor{white}
 & Ours       & $100\%$ & $0.67{\pm}0.07$ & $0\%$ & $31.32{\pm}0.87$ & $14.78{\pm}0.11$ & $80\%$ & $0.55{\pm}0.02$ & $0\%$ & $42.75{\pm}10.12$ & $15.62{\pm}1.23$ & $80\%$ & $0.53{\pm}0.05$ & $10\%$ & $42.44{\pm}3.21$ & $15.36{\pm}0.30$ \\ \midrule

\multirow{4}{*}{\textbf{\shortstack[l]{Unstructured\\Flow}}}
 & ST-Planner & $100\%$ & $0.54{\pm}0.06$ & $20\%$ & $32.04{\pm}2.55$ & $14.27{\pm}1.07$ & $100\%$ & $0.64{\pm}0.06$ & $0\%$ & $33.47{\pm}3.03$ & $14.74{\pm}0.41$ & $30\%$ & $0.52{\pm}0.17$ & $20\%$ & $40.43{\pm}1.81$ & $15.17{\pm}0.36$ \\
 & MPPI       & $80\%$ & $0.59{\pm}0.06$ & $10\%$ & $35.38{\pm}3.54$ & $15.28{\pm}0.62$ & $20\%$ & $0.52{\pm}0.11$ & $20\%$ & $48.74{\pm}4.66$ & $15.94{\pm}0.66$ & $10\%$ & $0.54{\pm}0.09$ & $80\%$ & $42.12{\pm}2.46$ & $15.28{\pm}0.29$ \\
 & ACP-MPC    & $80\%$ & $0.82{\pm}0.28$ & $0\%$ & $46.09{\pm}11.68$ & $19.03{\pm}3.15$ & $30\%$ & $0.49{\pm}0.31$ & $20\%$ & $58.40{\pm}16.08$ & $20.61{\pm}2.87$ & $0\%$ & $0.84{\pm}0.14$ & $0\%$ & $81.99{\pm}8.20$ & $23.90{\pm}4.25$ \\ \cmidrule(l){2-17} 
  \rowcolor{shadowColor} \cellcolor{white}
 & Ours       & $100\%$ & $0.79{\pm}0.13$ & $0\%$ & $27.85{\pm}0.69$ & $14.45{\pm}0.04$ & $100\%$ & $0.52{\pm}0.06$ & $10\%$ & $34.09{\pm}1.87$ & $14.86{\pm}0.16$ & $80\%$ & $0.58{\pm}0.08$ & $0\%$ & $37.12{\pm}5.74$ & $15.76{\pm}0.67$ \\ \bottomrule
\end{tabular}%
}
\end{table*}

\begin{figure*}[t]
    \centering
    \setlength{\tabcolsep}{1pt} 
    \begin{minipage}[c]{0.03\linewidth}
        \centering\rotatebox{90}{\scriptsize \textbf{Ours}}
    \end{minipage}%
    \hfill
    \begin{minipage}[c]{0.95\linewidth}
        \centering
        \includegraphics[width=0.23\linewidth]{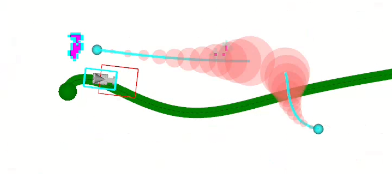}\hfill
        \includegraphics[width=0.23\linewidth]{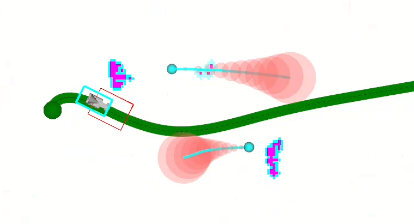}\hfill
        \includegraphics[width=0.23\linewidth]{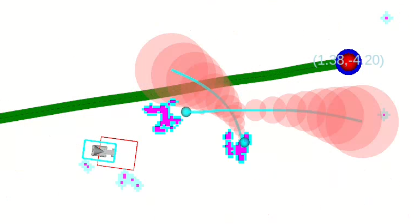}\hfill
        \includegraphics[width=0.23\linewidth]{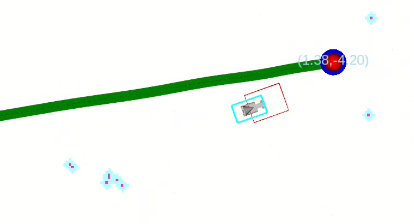}
    \end{minipage}
    \vspace{-4pt}
    \begin{minipage}[c]{0.03\linewidth}
        \centering\rotatebox{90}{\scriptsize \textbf{ST-Planner}}
    \end{minipage}%
    \hfill
    \begin{minipage}[c]{0.95\linewidth}
        \centering
        \includegraphics[width=0.23\linewidth]{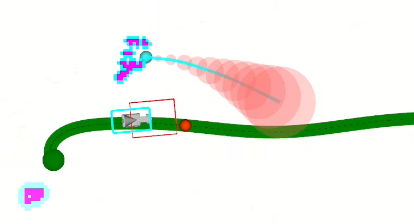}\hfill
        \includegraphics[width=0.23\linewidth]{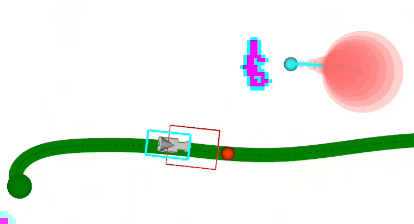}\hfill
        \includegraphics[width=0.23\linewidth]{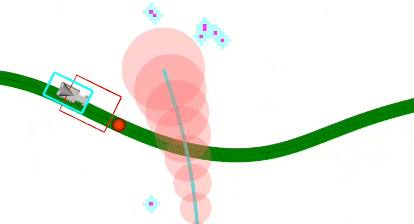}\hfill
        \includegraphics[width=0.23\linewidth]{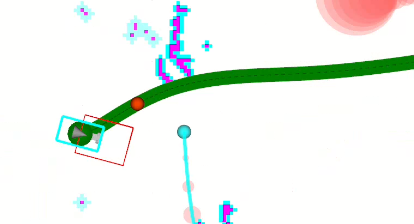}
    \end{minipage}
    \vspace{-4pt}
    \begin{minipage}[c]{0.03\linewidth}
        \centering\rotatebox{90}{\scriptsize \textbf{ACP-MPC}}
    \end{minipage}%
    \hfill
    \begin{minipage}[c]{0.95\linewidth}
        \centering
        \includegraphics[width=0.23\linewidth]{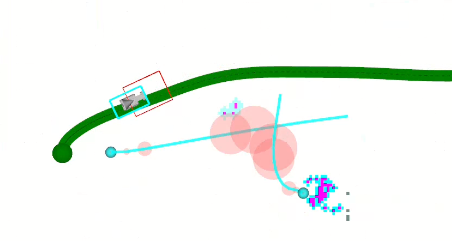}\hfill
        \includegraphics[width=0.23\linewidth]{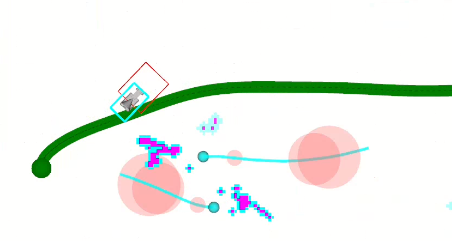}\hfill
        \includegraphics[width=0.23\linewidth]{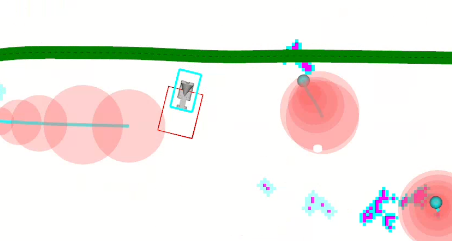}\hfill
        \includegraphics[width=0.23\linewidth]{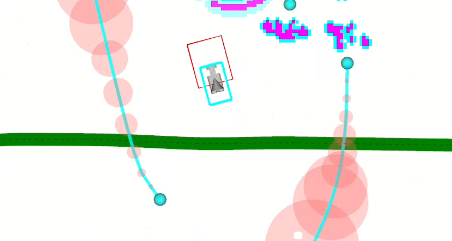}
    \end{minipage}
    \vspace{-2pt} 
    \begin{minipage}[c]{0.03\linewidth}
        \hfill 
    \end{minipage}%
    \hfill
    \begin{minipage}[c]{0.95\linewidth}
        \centering \footnotesize 
        \makebox[0.24\linewidth]{$t=t_1$}\hfill
        \makebox[0.24\linewidth]{$t=t_2$}\hfill
        \makebox[0.24\linewidth]{$t=t_3$}\hfill
        \makebox[0.24\linewidth]{$t=t_4$}
    \end{minipage}
    \caption{Sequential snapshots of robot behaviors produced by our approach and baselines when navigating through crowds. The timestamps at the bottom indicate different physical time for each navigation episode.}
    \label{fig:qualitative_results}
\end{figure*}

\subsection{Comparison Experiments}
\label{sec:comparison_exps}
\subsubsection{Quantitative analysis}

Table \ref{tab:quantitative_comparison} compares our method against the baselines across three scenarios and three crowd densities, with each method run for 10 trials per configuration and the start and goal positions sampled from specified regions. In sparse crowds (Ped=4), most planners attain a high SR with few collisions, yet our method already stands out, reaching a $100\%$ SR at a $0\%$ CR in all scenarios while keeping a large MD at a short NT and PL. As the density grows to 8 and 12 pedestrians, the baselines degrade sharply, whereas our method sustains an SR of at least $80\%$ and a CR of at most $10\%$ throughout; in Intersecting Flow at Ped=12, for instance, both ST-Planner and MPPI collapse to a mere $10\%$ SR while our method still reaches $80\%$.
Notably, the baselines fail for opposite reasons: ST-Planner relies on reactive replanning and thus incurs a high CR under dense flows (e.g., $100\%$ CR in Opposing and Intersecting Flow at Ped=8), whereas ACP-MPC keeps CR low but grows overly conservative, inflating NT beyond twice ours (e.g., $102.49$s vs.\ $39.43$s in Opposing Flow at Ped=8) so that most episodes time out and its SR collapses to $0\%$. Our method avoids both failure modes, combining efficient navigation with a low collision rate.

\subsubsection{Qualitative evaluation}
Figure \ref{fig:qualitative_results} presents sequential snapshots of the Unstructured Flow scenario (Ped=8), where the green curve traces the robot trajectory and the red regions denote the predicted pedestrian occupancy, while Fig. \ref{fig:ours_robot_data} details the corresponding motion profiles for our method and ACP-MPC. As shown in the first row of Fig. \ref{fig:qualitative_results}, our method anticipates the crowd and initiates early, smooth avoidance, staying close to the global plan and reaching the goal by $t_4$. This is corroborated by the top row of Fig. \ref{fig:ours_robot_data}, where the executed trajectory closely follows the global plan and the predominantly forward linear velocity together with bounded angular commands ($|\omega|\!\le\!0.6$\,rad/s) drive the robot to the goal in about 42\,s. In contrast, ST-Planner reacts only to instantaneous observations, so pedestrians intrude directly onto its tracked path and induce a near-collision. ACP-MPC instead produces excessively large conformal sets that force overly conservative maneuvers; accordingly, the bottom row of Fig. \ref{fig:ours_robot_data} exhibits a pronounced detour loop and highly oscillatory velocity commands, requiring over 100\,s to complete the episode.

\begin{figure*}[t]
    \centering
    \includegraphics[width=0.30\linewidth]{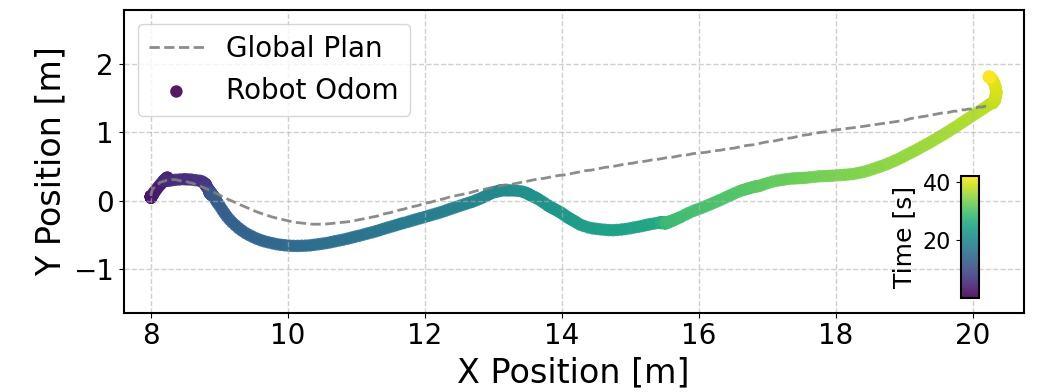}\hfill
    \includegraphics[width=0.30\linewidth]{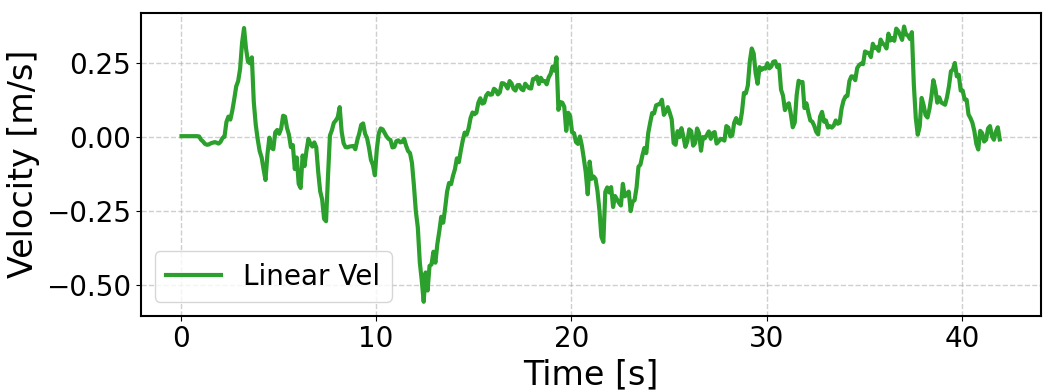}\hfill
    \includegraphics[width=0.30\linewidth]{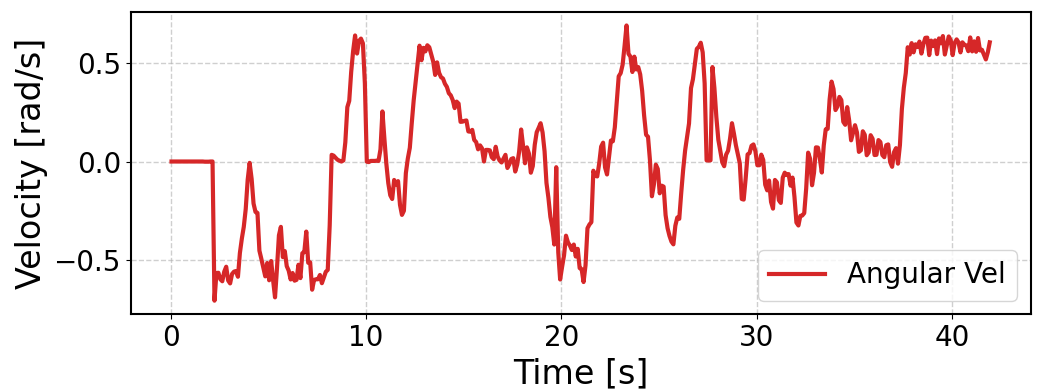}

    \vspace{-1pt}

    \includegraphics[width=0.30\linewidth]{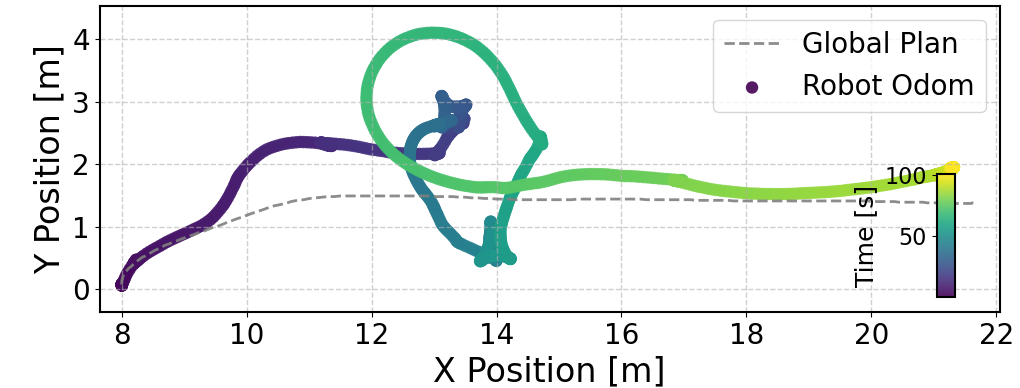}\hfill
    \includegraphics[width=0.30\linewidth]{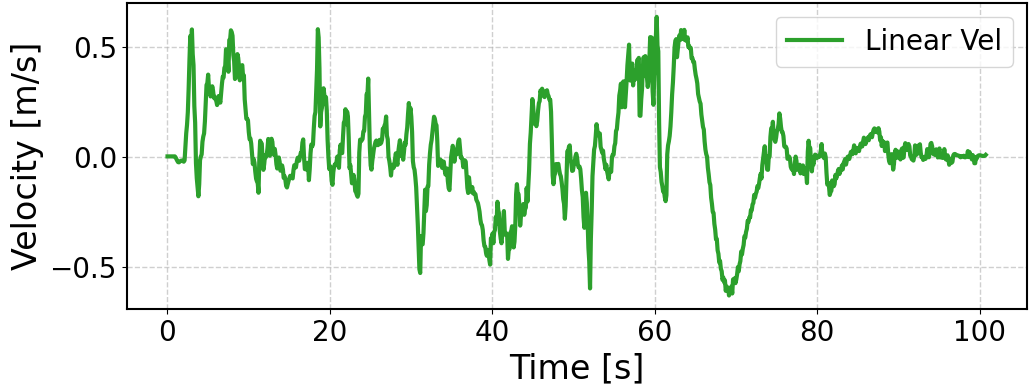}\hfill
    \includegraphics[width=0.30\linewidth]{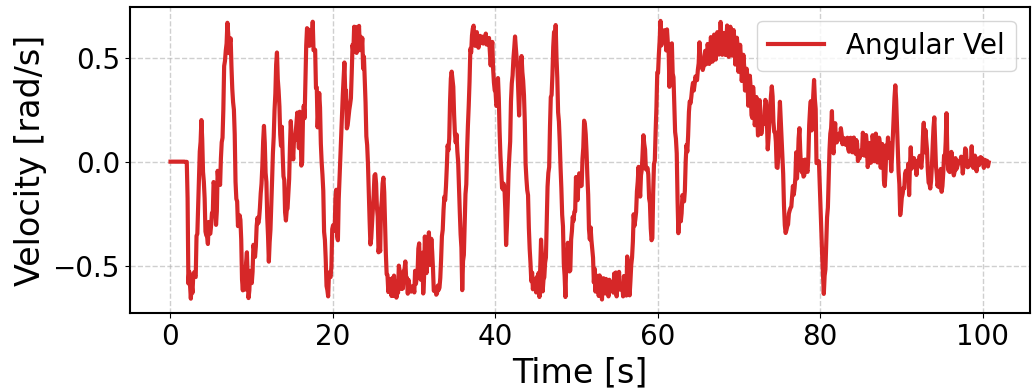}

    \caption{Comparison of robot motion generation between our method (top) and ACP-MPC (bottom): trajectory (left), linear velocity (middle), and angular velocity (right).}
    \label{fig:ours_robot_data}
\end{figure*}

\subsection{Ablation Studies}
\label{sec:ablation_study}
To assess the individual contributions of our framework, we conduct two sets of ablation studies designed to answer the following questions: (1) Can CPI effectively handle distribution shifts? (2) Does the \textit{relax-then-verify} strategy achieve an effective balance between safety and feasibility?

\subsubsection{On CPI calibration}

Figure \ref{fig:ablation_conformal_marginal} plots the per-frame quantification error ($r-\varepsilon$, the gap between the realized nonconformity score and the estimated quantile), and Table \ref{tab:stats_conformal} summarizes its magnitude across horizons. Under the shift, ACP repeatedly diverges, saturating to the infinite-uncertainty bound (top ticks of the middle panel); we thus report ACP$^*$ over the finite frames only, which still understates its true error. The two bounded schemes are nearly identical at $h{=}1$, but CPI adapts better as the horizon grows, attaining a tighter error at $h{=}12$ ($0.463$ vs.\ $0.543$), where its curve stays centered near zero while Conformal P-control drifts to larger margins. 
\begin{figure}[t]
    \centering
    \includegraphics[width=\linewidth]{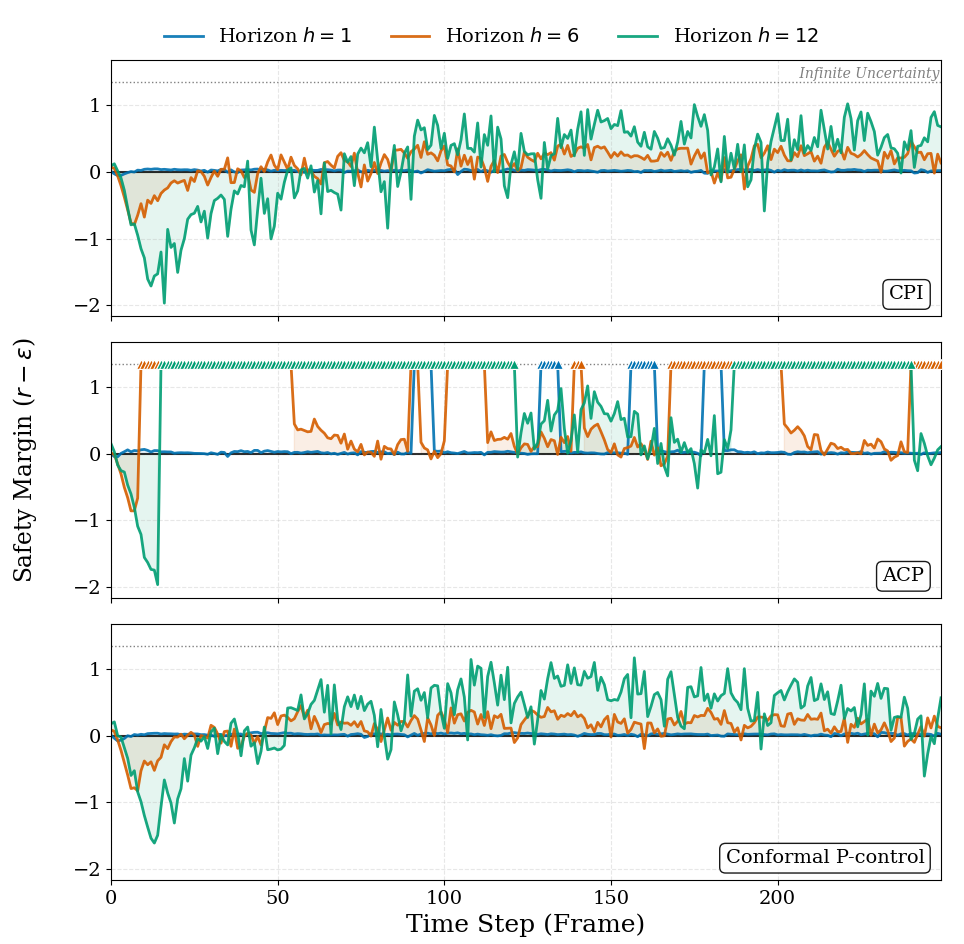}
    \caption{Ablation of conformal safety margins under distribution shift. The margin is the actual nonconformity score minus the estimated quantile.}
    \label{fig:ablation_conformal_marginal}
\end{figure}
\begin{table}[t]
\centering
\small
\caption{Statistical results of conformal safety marginals.}
\label{tab:stats_conformal}
\resizebox{\columnwidth}{!}{%
\begin{tabular}{@{}llll@{}}
\toprule
\multirow{2}{*}{\textbf{Method}} & \multicolumn{3}{c}{\textbf{Horizon}}                                         \\ \cmidrule(l){2-4} 
                                 & \multicolumn{1}{c}{h=1} & \multicolumn{1}{c}{h=6} & \multicolumn{1}{c}{h=12} \\ \midrule
ACP*                             & $0.022{\pm}0.014$             & $0.183{\pm}0.124$             & $0.386{\pm}0.277$              \\
Conformal P-control                       & $0.022{\pm}0.010$             & $0.205{\pm}0.108$             & $0.543{\pm}0.272$              \\
CPI                              & $0.023{\pm}0.010$             & $0.225{\pm}0.104$             & $0.463{\pm}0.249$              \\ \bottomrule
\end{tabular}%
}
\end{table}
Table \ref{tab:conformal_effects} shows the downstream effect on local planning (Unstructured Flow, Ped=8). ACP's divergence cripples its planner: the oversized sets repeatedly render the optimization infeasible, limiting ACP-MPC to a $20\%$ SR at an inflated $58.64$s navigation time. In contrast, both Conformal P-control MPC and CPI-MPC reach a $100\%$ SR, confirming that bounded, well-calibrated uncertainty is essential for reliable planning under domain shift, with CPI-MPC being slightly more efficient ($28.83$s, $13.90$m).
\begin{table}[t]
\centering
\caption{Crowd navigation performance with different conformal prediction methods (unstructured flow, Ped=8).}
\label{tab:conformal_effects}
\resizebox{\columnwidth}{!}{%
\begin{tabular}{@{}llllll@{}}
\toprule
Method  & SR $\uparrow$ & MD(m) $\uparrow$ & CR $\downarrow$ & NT(s) $\downarrow$ & PL(m) $\downarrow$ \\ \midrule
ACP-MPC       & $20\%$   &  $0.60{\pm}0.18$     & $10\%$    &   $58.64{\pm}7.83$    &  $21.18{\pm}2.35$     \\
Conformal P-control MPC &  $100\%$  &   $0.58{\pm}0.08$    & $20\%$   &   $29.50{\pm}1.80$    &   $15.20{\pm}0.90$    \\
CPI-MPC   &  $100\%$  &   $0.56{\pm}0.04$    &  $20\%$  &  $28.83{\pm}1.48$     &   $13.90{\pm}0.42$   \\ \bottomrule
\end{tabular}%
}
\end{table}
\subsubsection{On relax-then-verify control}
Table \ref{tab:verify_module} isolates our two-stage control mechanism in a medium-density scenario (Ped=8), comparing against Quasi-Hard MPC, which enforces the constraints via high cost penalties, and Soft-Only MPC, which relaxes them but omits the posterior verification. Enforcing hard constraints leaves Quasi-Hard MPC over-conservative and erratic, nearly doubling the navigation time ($46.82$s) with a large variance ($\pm13.91$). Relaxation alone (Soft-Only MPC) restores efficiency ($26.75$s) but yields the lowest safety clearance ($0.52$m). Reinstating the verification step, our full scheme recovers the clearance to $0.56$m at a marginal efficiency cost ($28.83$s), while attaining the smallest variance on every metric. These results confirm that the \textit{relax-then-verify} strategy effectively balances safety and feasibility.
\begin{table}[t!]
\centering
\caption{Ablation results of \textit{relax-then-verify} strategy.}
\label{tab:verify_module}
\resizebox{\columnwidth}{!}{%
\begin{tabular}{@{}llllll@{}}
\toprule
\multirow{2}{*}{\textbf{Method}} & \multicolumn{2}{c}{\textbf{Components}} & \multicolumn{3}{c}{\textbf{Metrics}} \\ \cmidrule(l){2-6} 
               & Relaxed? & Verify? & MD(m) $\uparrow$ & NT(s) $\downarrow$ & PL(m) $\downarrow$ \\ \midrule
Quasi-Hard MPC & \ding{55}       & \ding{55}      & $0.65{\pm}0.28$    & $46.82{\pm}13.91$    & $19.03{\pm}3.44$    \\
Soft-Only MPC  & \ding{51}      & \ding{55}      & $0.52{\pm}0.08$    & $26.75{\pm}1.59$    & $13.35{\pm}0.38$    \\
CPI-MPC    & \ding{51}      & \ding{51}     & $0.56{\pm}0.04$    & $28.83{\pm}1.48$    & $13.90{\pm}0.42$    \\ \bottomrule
\end{tabular}%
}
\end{table}
\subsection{Real-World Deployment}
\label{sec:real_exps}
We deploy our approach on a 12-DoF quadruped robot equipped with a Mid-360 LiDAR and an IMX415 camera module. The perception module fuses these two sensing modalities: the LiDAR provides odometry and a real-time occupancy map of static obstacles, while the camera stream is processed by YOLOv11 \cite{khanam2024yolov11} for pedestrian detection and ByteTrackV2 \cite{zhang2022bytetrack} for multi-object tracking. The resulting detections are back-projected with the LiDAR depth and transformed into the global frame to recover each pedestrian's position and track history, which are then fed to the trajectory predictor. This perception module runs on an RK3588-based processing board (4\(\times\) Cortex-A76 cores at 2.4GHz paired with 4\(\times\) Cortex-A55 cores at 1.8GHz) at 10Hz, whereas the prediction and navigation modules run on an NVIDIA Xavier NX board, with the planner operating at 50Hz.
More details are presented in the supplementary video.
\section{CONCLUSIONS}
\label{sec:conclusions}
CoCoNav acts on fallible pedestrian forecasts by keeping online conformal calibration and runtime certification as separate stages, rather than embedding uncertainty sets as hard constraints. This decoupling preserves progress as crowd density grows, sustaining a high success rate with near-zero collisions while avoiding the detours and stalls that make conservative conformal-constrained MPC infeasible. Ablations confirm that horizon-specific PI calibration yields tighter margins under distribution shift and that a~posteriori verification recovers the safety clearance lost by soft-only planning, with onboard quadruped deployment indicating these behaviors transfer to a real robot.
A current limitation is that calibration trusts the prediction stage alone, ignoring perception and state-estimation noise. A promising direction is to fold latent world models into the loop, producing and acting on conformally bounded predictions within a single, tightly integrated decision process.






\bibliographystyle{IEEEtran}
\bibliography{refs}

@article{lindemann2023safe,
  title={Safe planning in dynamic environments using conformal prediction},
  author={Lindemann, Lars and Cleaveland, Matthew and Shim, Gihyun and Pappas, George J},
  journal={IEEE Robotics and Automation Letters},
  volume={8},
  number={8},
  pages={5116--5123},
  year={2023},
  publisher={IEEE}
}

@inproceedings{dixit2023adaptive,
  title={Adaptive conformal prediction for motion planning among dynamic agents},
  author={Dixit, Anushri and Lindemann, Lars and Wei, Skylar X and Cleaveland, Matthew and Pappas, George J and Burdick, Joel W},
  booktitle={Learning for Dynamics and Control Conference},
  pages={300--314},
  year={2023},
  organization={PMLR}
}

@article{gibbs2021adaptive,
  title={Adaptive conformal inference under distribution shift},
  author={Gibbs, Isaac and Candes, Emmanuel},
  journal={Advances in Neural Information Processing Systems},
  volume={34},
  pages={1660--1672},
  year={2021}
}

@article{huang2025interaction,
  title={Interaction-aware Conformal Prediction for Crowd Navigation},
  author={Huang, Zhe and Ji, Tianchen and Zhang, Heling and Pouria, Fatemeh Cheraghi and Driggs-Campbell, Katherine and Dong, Roy},
  journal={arXiv preprint arXiv:2502.06221},
  year={2025}
}

@article{yao2025towards,
  title={Towards Generalizable Safety in Crowd Navigation via Conformal Uncertainty Handling},
  author={Yao, Jianpeng and Zhang, Xiaopan and Xia, Yu and Wang, Zejin and Roy-Chowdhury, Amit K and Li, Jiachen},
  journal={arXiv preprint arXiv:2508.05634},
  year={2025}
}

@article{francis2025principles,
  title={Principles and guidelines for evaluating social robot navigation algorithms},
  author={Francis, Anthony and P{\'e}rez-d’Arpino, Claudia and Li, Chengshu and Xia, Fei and Alahi, Alexandre and Alami, Rachid and Bera, Aniket and Biswas, Abhijat and Biswas, Joydeep and Chandra, Rohan and others},
  journal={ACM Transactions on Human-Robot Interaction},
  volume={14},
  number={2},
  pages={1--65},
  year={2025},
  publisher={ACM New York, NY}
}

@inproceedings{xu2022socialvae,
  title={Socialvae: Human trajectory prediction using timewise latents},
  author={Xu, Pei and Hayet, Jean-Bernard and Karamouzas, Ioannis},
  booktitle={European Conference on Computer Vision},
  pages={511--528},
  year={2022},
  organization={Springer}
}

@article{han2023efficient,
  title={An efficient spatial-temporal trajectory planner for autonomous vehicles in unstructured environments},
  author={Han, Zhichao and Wu, Yuwei and Li, Tong and Zhang, Lu and Pei, Liuao and Xu, Long and Li, Chengyang and Ma, Changjia and Xu, Chao and Shen, Shaojie and others},
  journal={IEEE Transactions on Intelligent Transportation Systems},
  volume={25},
  number={2},
  pages={1797--1814},
  year={2023},
  publisher={IEEE}
}

@article{fox2002dynamic,
  title={The dynamic window approach to collision avoidance},
  author={Fox, Dieter and Burgard, Wolfram and Thrun, Sebastian},
  journal={IEEE robotics \& automation magazine},
  volume={4},
  number={1},
  pages={23--33},
  year={2002},
  publisher={IEEE}
}

@inproceedings{rosmann2017kinodynamic,
  title={Kinodynamic trajectory optimization and control for car-like robots},
  author={R{\"o}smann, Christoph and Hoffmann, Frank and Bertram, Torsten},
  booktitle={2017 IEEE/RSJ International Conference on Intelligent Robots and Systems (IROS)},
  pages={5681--5686},
  year={2017},
  organization={IEEE}
}

@inproceedings{williams2016aggressive,
  title={Aggressive driving with model predictive path integral control},
  author={Williams, Grady and Drews, Paul and Goldfain, Brian and Rehg, James M and Theodorou, Evangelos A},
  booktitle={2016 IEEE international conference on robotics and automation (ICRA)},
  pages={1433--1440},
  year={2016},
  organization={IEEE}
}

@article{rudenko2020human,
  title={Human motion trajectory prediction: A survey},
  author={Rudenko, Andrey and Palmieri, Luigi and Herman, Michael and Kitani, Kris M and Gavrila, Dariu M and Arras, Kai O},
  journal={The International Journal of Robotics Research},
  volume={39},
  number={8},
  pages={895--935},
  year={2020},
  publisher={Sage Publications Sage UK: London, England}
}

@inproceedings{van2011reciprocal,
  title={Reciprocal n-body collision avoidance},
  author={Van Den Berg, Jur and Guy, Stephen J and Lin, Ming and Manocha, Dinesh},
  booktitle={Robotics Research: The 14th International Symposium ISRR},
  pages={3--19},
  year={2011},
  organization={Springer}
}

@article{helbing1995social,
  title={Social force model for pedestrian dynamics},
  author={Helbing, Dirk and Molnar, Peter},
  journal={Physical review E},
  volume={51},
  number={5},
  pages={4282},
  year={1995},
  publisher={APS}
}

@inproceedings{sathyamoorthy2020densecavoid,
  title={Densecavoid: Real-time navigation in dense crowds using anticipatory behaviors},
  author={Sathyamoorthy, Adarsh Jagan and Liang, Jing and Patel, Utsav and Guan, Tianrui and Chandra, Rohan and Manocha, Dinesh},
  booktitle={2020 IEEE International Conference on Robotics and Automation (ICRA)},
  pages={11345--11352},
  year={2020},
  organization={IEEE}
}

@article{liu2022intention,
  title={Intention aware robot crowd navigation with attention-based interaction graph},
  author={Liu, Shuijing and Chang, Peixin and Huang, Zhe and Chakraborty, Neeloy and Hong, Kaiwen and Liang, Weihang and McPherson, D Livingston and Geng, Junyi and Driggs-Campbell, Katherine},
  journal={arXiv preprint arXiv:2203.01821},
  year={2022}
}

@inproceedings{huang2024neural,
  title={Neural informed rrt*: Learning-based path planning with point cloud state representations under admissible ellipsoidal constraints},
  author={Huang, Zhe and Chen, Hongyu and Pohovey, John and Driggs-Campbell, Katherine},
  booktitle={2024 IEEE International Conference on Robotics and Automation (ICRA)},
  pages={8742--8748},
  year={2024},
  organization={IEEE}
}

@inproceedings{alahi2016social,
  title={Social lstm: Human trajectory prediction in crowded spaces},
  author={Alahi, Alexandre and Goel, Kratarth and Ramanathan, Vignesh and Robicquet, Alexandre and Fei-Fei, Li and Savarese, Silvio},
  booktitle={Proceedings of the IEEE conference on computer vision and pattern recognition},
  pages={961--971},
  year={2016}
}

@inproceedings{salzmann2020trajectron,
  title={Trajectron++: Dynamically-feasible trajectory forecasting with heterogeneous data},
  author={Salzmann, Tim and Ivanovic, Boris and Chakravarty, Punarjay and Pavone, Marco},
  booktitle={European Conference on Computer Vision},
  pages={683--700},
  year={2020},
  organization={Springer}
}

@article{nakamura2022online,
  title={Online update of safety assurances using confidence-based predictions},
  author={Nakamura, Kensuke and Bansal, Somil},
  journal={arXiv preprint arXiv:2210.01199},
  year={2022}
}

@article{fridovich2020confidence,
  title={Confidence-aware motion prediction for real-time collision avoidance1},
  author={Fridovich-Keil, David and Bajcsy, Andrea and Fisac, Jaime F and Herbert, Sylvia L and Wang, Steven and Dragan, Anca D and Tomlin, Claire J},
  journal={The International Journal of Robotics Research},
  volume={39},
  number={2-3},
  pages={250--265},
  year={2020},
  publisher={SAGE Publications Sage UK: London, England}
}

@article{fan2021step,
  title={Step: Stochastic traversability evaluation and planning for risk-aware off-road navigation},
  author={Fan, David D and Otsu, Kyohei and Kubo, Yuki and Dixit, Anushri and Burdick, Joel and Agha-Mohammadi, Ali-Akbar},
  journal={arXiv preprint arXiv:2103.02828},
  year={2021}
}

@inproceedings{nair2022stochastic,
  title={Stochastic mpc with multi-modal predictions for traffic intersections},
  author={Nair, Siddharth H and Govindarajan, Vijay and Lin, Theresa and Meissen, Chris and Tseng, H Eric and Borrelli, Francesco},
  booktitle={2022 IEEE 25th International Conference on Intelligent Transportation Systems (ITSC)},
  pages={635--640},
  year={2022},
  organization={IEEE}
}

@article{omainska2021gaussian,
  title={Gaussian process-based visual pursuit control with unknown target motion learning in three dimensions},
  author={Omainska, Marco and Yamauchi, Junya and Beckers, Thomas and Hatanaka, Takeshi and Hirche, Sandra and Fujita, Masayuki},
  journal={SICE Journal of Control, Measurement, and System Integration},
  volume={14},
  number={1},
  pages={116--127},
  year={2021},
  publisher={Taylor \& Francis}
}

@article{angelopoulos2023conformal,
  title={Conformal prediction: A gentle introduction},
  author={Angelopoulos, Anastasios N and Bates, Stephen and others},
  journal={Foundations and trends{\textregistered} in machine learning},
  volume={16},
  number={4},
  pages={494--591},
  year={2023},
  publisher={Now Publishers, Inc.}
}

@article{tibshirani2019conformal,
  title={Conformal prediction under covariate shift},
  author={Tibshirani, Ryan J and Foygel Barber, Rina and Candes, Emmanuel and Ramdas, Aaditya},
  journal={Advances in neural information processing systems},
  volume={32},
  year={2019}
}

@article{strawn2023conformal,
  title={Conformal predictive safety filter for rl controllers in dynamic environments},
  author={Strawn, Kegan J and Ayanian, Nora and Lindemann, Lars},
  journal={IEEE Robotics and Automation Letters},
  volume={8},
  number={11},
  pages={7833--7840},
  year={2023},
  publisher={IEEE}
}

@inproceedings{huang2024conformal,
  title={Conformal policy learning for sensorimotor control under distribution shifts},
  author={Huang, Huang and Sharma, Satvik and Loquercio, Antonio and Angelopoulos, Anastasios and Goldberg, Ken and Malik, Jitendra},
  booktitle={2024 IEEE International Conference on Robotics and Automation (ICRA)},
  pages={16285--16291},
  year={2024},
  organization={IEEE}
}

@inproceedings{lekeufack2024conformal,
  title={Conformal decision theory: Safe autonomous decisions from imperfect predictions},
  author={Lekeufack, Jordan and Angelopoulos, Anastasios N and Bajcsy, Andrea and Jordan, Michael I and Malik, Jitendra},
  booktitle={2024 IEEE International Conference on Robotics and Automation (ICRA)},
  pages={11668--11675},
  year={2024},
  organization={IEEE}
}

@article{mirsky2024conflict,
  title={Conflict avoidance in social navigation—a survey},
  author={Mirsky, Reuth and Xiao, Xuesu and Hart, Justin and Stone, Peter},
  journal={ACM Transactions on Human-Robot Interaction},
  volume={13},
  number={1},
  pages={1--36},
  year={2024},
  publisher={ACM New York, NY}
}

@article{de2024topology,
  title={Topology-driven parallel trajectory optimization in dynamic environments},
  author={De Groot, Oscar and Ferranti, Laura and Gavrila, Dariu M and Alonso-Mora, Javier},
  journal={IEEE Transactions on Robotics},
  year={2024},
  publisher={IEEE}
}

@article{brito2019model,
  title={Model predictive contouring control for collision avoidance in unstructured dynamic environments},
  author={Brito, Bruno and Floor, Boaz and Ferranti, Laura and Alonso-Mora, Javier},
  journal={IEEE Robotics and Automation Letters},
  volume={4},
  number={4},
  pages={4459--4466},
  year={2019},
  publisher={IEEE}
}

@inproceedings{poddar2023crowd,
  title={From crowd motion prediction to robot navigation in crowds},
  author={Poddar, Sriyash and Mavrogiannis, Christoforos and Srinivasa, Siddhartha S},
  booktitle={2023 IEEE/RSJ International Conference on Intelligent Robots and Systems (IROS)},
  pages={6765--6772},
  year={2023},
  organization={IEEE}
}

@article{martinez2024shine,
  title={SHINE: Social homology identification for navigation in crowded environments},
  author={Martinez-Baselga, Diego and de Groot, Oscar and Knoedler, Luzia and Riazuelo, Luis and Alonso-Mora, Javier and Montano, Luis},
  journal={The International Journal of Robotics Research},
  pages={02783649251344639},
  year={2024},
  publisher={SAGE Publications Sage UK: London, England}
}

@inproceedings{xie2021towards,
  title={Towards safe navigation through crowded dynamic environments},
  author={Xie, Zhanteng and Xin, Pujie and Dames, Philip},
  booktitle={2021 IEEE/RSJ International Conference on Intelligent Robots and Systems (IROS)},
  pages={4934--4940},
  year={2021},
  organization={IEEE}
}

@inproceedings{pokle2019deep,
  title={Deep local trajectory replanning and control for robot navigation},
  author={Pokle, Ashwini and Mart{\'\i}n-Mart{\'\i}n, Roberto and Goebel, Patrick and Chow, Vincent and Ewald, Hans M and Yang, Junwei and Wang, Zhenkai and Sadeghian, Amir and Sadigh, Dorsa and Savarese, Silvio and others},
  booktitle={2019 international conference on robotics and automation (ICRA)},
  pages={5815--5822},
  year={2019},
  organization={IEEE}
}

@article{karnan2022socially,
  title={Socially compliant navigation dataset (scand): A large-scale dataset of demonstrations for social navigation},
  author={Karnan, Haresh and Nair, Anirudh and Xiao, Xuesu and Warnell, Garrett and Pirk, S{\"o}ren and Toshev, Alexander and Hart, Justin and Biswas, Joydeep and Stone, Peter},
  journal={IEEE Robotics and Automation Letters},
  volume={7},
  number={4},
  pages={11807--11814},
  year={2022},
  publisher={IEEE}
}

@article{chandra2024towards,
  title={Towards imitation learning in real world unstructured social mini-games in pedestrian crowds},
  author={Chandra, Rohan and Karnan, Haresh and Mehr, Negar and Stone, Peter and Biswas, Joydeep},
  journal={arXiv e-prints},
  pages={arXiv--2405},
  year={2024}
}

@inproceedings{chen2019crowd,
  title={Crowd-robot interaction: Crowd-aware robot navigation with attention-based deep reinforcement learning},
  author={Chen, Changan and Liu, Yuejiang and Kreiss, Sven and Alahi, Alexandre},
  booktitle={2019 international conference on robotics and automation (ICRA)},
  pages={6015--6022},
  year={2019},
  organization={IEEE}
}

@article{everett2021collision,
  title={Collision avoidance in pedestrian-rich environments with deep reinforcement learning},
  author={Everett, Michael and Chen, Yu Fan and How, Jonathan P},
  journal={Ieee Access},
  volume={9},
  pages={10357--10377},
  year={2021},
  publisher={IEEE}
}

@article{khanam2024yolov11,
  title={Yolov11: An overview of the key architectural enhancements},
  author={Khanam, Rahima and Hussain, Muhammad},
  journal={arXiv preprint arXiv:2410.17725},
  year={2024}
}

@inproceedings{zhang2022bytetrack,
  title={Bytetrack: Multi-object tracking by associating every detection box},
  author={Zhang, Yifu and Sun, Peize and Jiang, Yi and Yu, Dongdong and Weng, Fucheng and Yuan, Zehuan and Luo, Ping and Liu, Wenyu and Wang, Xinggang},
  booktitle={European conference on computer vision},
  pages={1--21},
  year={2022},
  organization={Springer}
}

@software{Agarwal_Ceres_Solver_2022,
  author = {Agarwal, Sameer and Mierle, Keir and The Ceres Solver Team},
  title = {{Ceres Solver}},
  license = {Apache-2.0},
  url = {https://github.com/ceres-solver/ceres-solver},
  version = {2.2},
  year = {2023},
  month = {10}
}

@article{angelopoulos2023conformalpid,
  title={Conformal pid control for time series prediction},
  author={Angelopoulos, Anastasios and Candes, Emmanuel and Tibshirani, Ryan J},
  journal={Advances in neural information processing systems},
  volume={36},
  pages={23047--23074},
  year={2023}
}

@article{hsu2023safety,
  title={The safety filter: A unified view of safety-critical control in autonomous systems},
  author={Hsu, Kai-Chieh and Hu, Haimin and Fisac, Jaime F},
  journal={Annual Review of Control, Robotics, and Autonomous Systems},
  volume={7},
  year={2023},
  publisher={Annual Reviews}
}

@inproceedings{bastani2021safe,
  title={Safe reinforcement learning with nonlinear dynamics via model predictive shielding},
  author={Bastani, Osbert},
  booktitle={2021 American control conference (ACC)},
  pages={3488--3494},
  year={2021},
  organization={IEEE}
}

@article{wabersich2021predictive,
  title={A predictive safety filter for learning-based control of constrained nonlinear dynamical systems},
  author={Wabersich, Kim Peter and Zeilinger, Melanie N},
  journal={Automatica},
  volume={129},
  pages={109597},
  year={2021},
  publisher={Elsevier}
}

@article{wabersich2021probabilistic,
  title={Probabilistic model predictive safety certification for learning-based control},
  author={Wabersich, Kim P and Hewing, Lukas and Carron, Andrea and Zeilinger, Melanie N},
  journal={IEEE Transactions on Automatic Control},
  volume={67},
  number={1},
  pages={176--188},
  year={2021},
  publisher={IEEE}
}

\end{document}